\let\oldcdot\cdot
\let\cdot\oldcdot
\theoremstyle{plain}
\newtheorem{theorem}{Theorem}[section]
\newtheorem{proposition}[theorem]{Proposition}
\theoremstyle{definition}
\theoremstyle{remark}
\newtheorem{remark}[theorem]{Remark}
\newcommand{\bS}{\mathbf{S}}
\newcommand{\bP}{\mathbb{P}}
\newcommand{\bv}{\mathbf{v}}
\newcommand{\ba}{\mathbf{a}}
\newcommand{\bA}{\mathbf{A}}
\newcommand{\bB}{\mathbf{B}}
\newcommand{\bC}{\mathbf{C}}
\newcommand{\bW}{\mathbf{W}}
\newcommand{\bM}{\mathbf{M}}
\newcommand{\bU}{\mathbf{U}}
\newcommand{\bD}{\mathbf{D}}
\newcommand{\bb}{\mathbf{b}}
\newcommand{\bk}{\mathbf{k}}
\newcommand{\bq}{\mathbf{q}}
\newcommand{\be}{\mathbf{e}}
\newcommand{\bxi}{\boldsymbol{\xi}}
\newcommand{\bomega}{\boldsymbol{\omega}}
\newcommand{\R}{\mathbb{R}}
\newcommand{\sprod}[2]{\langle #1, #2 \rangle}
\begin{document}

\title{ParallelFlow: Parallelizing Linear Transformers via Flow Discretization}

\author{\textbf{Nicola Muca Cirone} \\ Department of Mathematics \\ Imperial College London \\ \emph{n.muca-cirone22@imperial.ac.uk} \and
\textbf{Cristopher Salvi} \\ Department of Mathematics \\ Imperial College London \\ \emph{c.salvi@imperial.ac.uk}}
\date{}

\maketitle

\begin{abstract}
We present a theoretical framework for analyzing linear attention models through matrix-valued state space models (SSMs), building on the foundations established in \cite{cirone2024theoretical}.
Our approach, \emph{Parallel Flows}, provides a perspective that systematically decouples temporal dynamics from implementation constraints, enabling independent analysis of critical algorithmic components: chunking, parallelization, and information aggregation.
Central to this framework is the reinterpretation of chunking procedures \cite{pmlr-v162-hua22a, sun2023retentivenetworksuccessortransformer} as computations of the \emph{flows} governing system dynamics. This connection establishes a bridge to mathematical tools from rough path theory, opening the door to new insights into sequence modeling architectures.
As a concrete application, we analyze DeltaNet \cite{pmlr-v139-schlag21a} in a generalized low-rank setting motivated by recent theoretical advances \cite{siems2025deltaproductimprovingstatetrackinglinear, grazzi2024unlockingstatetrackinglinearrnns}.
Our methods allow us to design simple, streamlined generalizations of hardware-efficient algorithms \cite{yang2024parallelizinglineartransformersdelta} present in the literature, and to provide completely different ones, inspired by rough paths techniques, with provably lower complexity.
This dual contribution demonstrates how principled theoretical analysis can both explain existing practical methods and inspire fundamentally new computational approaches.
\end{abstract}

\section{Introduction}

\subsection{Linear attention transformers as associative linear RNNs} 

The attention mechanism \cite{vaswani2017attention} has become a critical primitive for accurate sequence modeling, offering training efficiency through extensive matrix multiplications that exploit the highly parallel processing capabilities and specialized accelerators of modern GPUs. However, its computational complexity scales quadratically with sequence length, making it inherently expensive for long sequences. Recent techniques \cite{dao2022flashattention,dao2023flashattention} have enabled scaling attention to longer sequences by restructuring intermediate computations in a hardware-aware manner. Despite this progress, these methods still require storing the key and value vectors of previous elements, and managing this "KV cache" can become cumbersome when dealing with very long sequences. \emph{Linear attention transformers} \cite{linear_attention} replace the exponential kernel in softmax attention with a dot product over (possibly transformed) key and query vectors. This adjustment allows linear attention to be formulated as a linear RNN with matrix-valued hidden states, eliminating the need for a KV cache and enabling constant-memory inference. More precisely, \emph{associative linear RNNs} with matrix-valued hidden state $\bS_t \in \mathbb{R}^{d \times d}$ and input sequence $(x_1,...,x_L) \subset \mathbb{R}^d$ are defined by the following equations:
\begin{align}
\bS_t &= \bS_{t-1} \diamond \bA_t + \bv_t \bk_t, \tag{recurrence} \\
\mathbf{o}_t &= \bS_t \bq_t, \tag{memory readout}
\end{align}
where $\diamond$ is an associative product (e.g. Hadamard product, matrix multiplication etc.), and the matrix $\bA_t$ and the vectors $\bv_t, \bk_t, \bq_t$  are (potentially non-linear) functions of the input sequence $x_t$. In this language, the hidden state recurrence in linear attention simply reads as $\bS_t = \bS_{t-1} + \bv_t \bk^\top_t$.

\subsection{Sequential vs parallel form tradeoffs} 
Denoting by $\mathbf{Q}, \mathbf{K}, \mathbf{V} \in \mathbb{R}^{L \times d}$ the matrices of stacked query, key, and value vectors, we can then compute the output $\mathbb{R}^{L \times d} \ni \mathbf{O} = (\mathbf{QK}^\top \odot \mathbf{M}) \mathbf{V}$ in parallel, where $\mathbf{M} \in \mathbb{R}^{L \times L}$ is the standard causal mask.  
This parallel form and the above recurrent form have different FLOPs and memory cost trade-offs. 
The parallel form takes $\mathcal{O}(L^2d + L d^2)$ and thus requires more FLOPs than the recurrent form, which takes $\mathcal{O}(Ld^2)$. 
However, the parallel form can often be faster in practice for moderate-length sequences as it can be fully parallelized and done in $\mathcal{O}(1)$ steps. This sequence-level parallelism also enables high GPU occupancy. While the recurrent form requires fewer FLOPs, the element-wise operations involved in the recurrence have low arithmetic intensity, unlike the matrix-multiplication operations in the parallel form. Thus, the recurrent form is often slower in practice when implemented on GPU hardware. Nonetheless, the presence of an associative operator $\diamond$ enables the use of parallel scan~\cite{BlellochTR90} to parallelize the recurrent form and calculate the hidden states $\bS_1, \dots, \bS_L$ in $\mathcal{O}(\log L)$ steps and $\mathcal{O}(Ld^2)$ FLOPs. However, this approach requires materializing the hidden state $\bS_t$ for each time step, which incurs significant memory I/O cost. Thus, finding an optimal tradeoff between parallelisation and memory footprint is of paramount importance.

\subsection{Gated linear transformers and SSMs}

While early versions of linear attention generally underperformed compared to softmax attention in language modeling tasks, recent ones incorporating data-dependent gating factors have shown competitiveness against strong transformer baselines. These \emph{gated linear transformers} like
\begin{align}
   \bS_t =  &\gamma \bS_{t-1} + \bv_t \bk_t^\top, \tag{RetNet~\cite{qin2023scaling}} \\
  \bS_t = & \bS_{t-1} (\mathbf{1}\alpha_t^\top) + \bv_t \bk_t^\top, \tag{GLA~\cite{yanggated}}  
\end{align}
along with time-varying \emph{state space models} (SSMs) such as
\begin{align}
    \bS_t = & \bS_{t-1} \odot \exp(-(\alpha_t \mathbf{1}^\top) \odot \exp(A)) + (B \odot v_t 1^\top), \tag{S4~\cite{guefficiently}}\\
    \bS_t = & \bS_{t-1} \odot \exp(-(\alpha_t \mathbf{1}^\top) \odot \exp(A)) + (\alpha_t \odot \bv_t)\bk_t^\top, \tag{Mamba~\cite{gu2023mamba}}
\end{align}
have been proposed as potential alternatives to standard transformers. These models make use of cheap element-wise recurrence updates, in particular the Hadamard product, i.e. $\diamond = \odot$.  Standard matrix multiplications (i.e. $\bS_{t-1} \diamond \bA_t = \bS_{t-1} \bA_t$) on the other hand can model richer interactions that go beyond elementwise recurrence. Without any structural assumptions on $\bA_t$ however, these operations would take $\mathcal{O}(dn^2)$ for each update (as opposed to $\mathcal{O}(dn)$ for the Hadamard product), which would be prohibitively expensive. However, despite their competitive language modeling performance, studies have shown that these models underperform compared to transformers on recall-intensive and long context tasks~\cite{arora2024simple}. To improve associative recall over long contexts,~\cite{schlag2020learning} introduced DeltaNet
\begin{equation}
    \bS_t = \bS_{t-1} - \beta_t \bS_{t-1} \bk_t \bk_t^\top + \beta_t \bv_t \bk_t^\top \tag{DeltaNet \cite{schlag2020learning}}
\end{equation}
a variant of linear transformers that utilizes a delta rule-like update to retrieve and update the value vector associated with the current key. DeltaNet’s use of $\bA_t = I - \beta_t \bk_t \bk_t^\top$ (i.e. identity + rank-1 matrix) can be seen as exploiting structured matrices to efficiently model interactions beyond elementwise recurrences. DeltaNet demonstrated effectiveness on synthetic tasks as well as small-scale language modeling and machine translation. 

However, although the original implementation can be parallelized across sequence length by means of parallel scan, as stated above, the naive approach incurs a high memory cost leading to hardware-inefficient training. 
In \cite{yang2024parallelizinglineartransformersdelta} an hardware-efficient algorithm is proposed for rank $1$ updates, finding a good compromise between sequence-length parallelization and memory efficiency.
However, it remains unclear how to efficiently scale DeltaNet to cases with updates with rank $R > 1$.

\subsection{Contributions}

This work strengthens the connection between State Space Models (SSMs) and controlled differential equations (CDEs) established in \cite{cirone2024theoretical}, extending it to the setting of \textit{matrix-valued SSMs} needed to draw connections to linear transformer architectures \cite{Dao_StateSpaceDuality}.

By adopting this high-level perspective, we disentangle \textit{temporal dynamics} from \textit{implementation-specific choices}, enabling a modular analysis of the core components governing numerical performance and algorithm design. Key computational stages--chunking, parallelization, and information aggregation--are systematically decoupled and studied independently. 
Our method, which we dub \emph{Parallel Flows}, rests on interpreting the chunking procedure \cite{pmlr-v162-hua22a, sun2023retentivenetworksuccessortransformer, yang2024parallelizinglineartransformersdelta} as \emph{flow computation}, providing a unified framework for parallelizable pipelines. 
We argue this perspective offers a principled foundation for future investigations into expressivity and scaling limits, see Section \ref{sec:future_work}.

As a first case study, in Section~\ref{subsec:high_rank} we propose a generalized Delta rule \cite{pmlr-v139-schlag21a} for linear transformers, where update matrices are of rank \( R \geq 1 \). This extension is motivated by recent theoretical insights \cite{siems2025deltaproductimprovingstatetrackinglinear,grazzi2024unlockingstatetrackinglinearrnns} demonstrating that increased rank enhances model expressivity.
Placing these models into our framework yields two key contributions:  

\begin{itemize}[topsep=0pt,parsep=0pt] 
    \item \textbf{Hardware-efficient algorithms} (Theorem~\ref{theo:high_rank_delta}): We obtain simplified derivations generalizing to higher ranks existing intra-chunk computation strategies \cite{yang2024parallelizinglineartransformersdelta}. Our solutions address the current lack of dedicated low-rank kernels and, most crucially, it does so by keeping the parallelism across sequence length intact.
    \item \textbf{Signature-Inspired Technique} (Theorem~\ref{theo:sig_delta}): A novel algorithm inspired by signature kernel techniques \cite{salvi2021signature}, computing the same low-rank solution, parallel-in-time, and with theoretically superior temporal scaling properties. A comparison of complexities of the algorithms is presented in Table \ref{tab:comparison}, where it's shown how this new approach improves on the previous solution by one order of magnitude in sequence length $L$.
\end{itemize}

\begin{table*}[t]
\centering
\begin{tabular}{lcc}
\toprule
& \textbf{tensorInv}& \textbf{sigDelta}\\ 
\midrule

Memory & $\mathcal{O}(L^2R^2 + LRd + d^2)$ & $\mathcal{O}(L^2R^2 + LRd + d^2)$     \\
Sequential & $\mathcal{O}(L^2 R (d^2 + Rd + LR^2)))$ & $\mathcal{O}(L^2 R (d^2 + Rd + R)))$ \\
Parallel & $\mathcal{O}(L^2R + d)$  & $\mathcal{O}(LR + d)$                
\end{tabular}
\caption{Comparison of memory and theoretical computational complexities of the proposed algorithms: \textbf{tensorInv} is the algorithm computing $\bS_1$ as per Theorem~\ref{theo:high_rank_delta}, \textbf{sigDelta} as per Theorem~\ref{theo:sig_delta}.}
\label{tab:comparison}
\end{table*}

\section{CDEs: a unifying framework for autoregressive sequence models.}

As originally remarked for SSMs in the recent work~\cite{cirone2024theoretical}, the previous examples are nothing but discretisations of linear \emph{controlled differential equations} (CDEs).
We focus on matrix-valued CDEs of the form
\footnote{Our results extend to more general cases, such as ones where an Hadamard product takes the place of matrix multiplication.}
\begin{equation}
    d\bS_t = \bS_t \cdot d\bomega_t + d\bxi_t, \label{eq:cde}
\end{equation}
driven by matrix-valued paths $\bomega, \bxi: [0, 1] \to \R^{d \times d}$.


\subsection{Flows as series of iterated integrals}

CDEs are the core object studied in \emph{rough path theory}~\cite{lyons1998differential}, a robust solution theory for non-linear control systems driven by irregular signals.
As proved in \cite{cirone2024theoretical}[Appendix E], the solution to the CDE (\ref{eq:cde}) over an interval $[s,t]$ has the following form:
\begin{equation}
    \bS_t = \mathbb{W}_{s,t} \bS_s + \int_s^t \mathbb{W}_{r,t} d\bxi_r
\end{equation}

where the \emph{flow} $\mathbb{W}_{s,t}$ is a linear map from $\mathbb{R}^{d \times d}$ to $\mathbb{R}^{d \times d}$ defined as
\begin{align*}
    \mathbb{W}_{s,t} A &= A  \left( \sum_{n=0}^{\infty} ~~ \int\limits_{s < r_1 < \cdots < r_n < t} d\bomega_{r_1} \cdots d\bomega_{r_n} \right) 
\end{align*}

In conclusion we see that:
\begin{proposition}[Flow]\label{prop:flow_body}
    Given matrix-valued paths $\bomega, \bxi: [0, 1] \to \R^{d \times d}$, the CDE defined by 
    \begin{equation}
        d\bS_t = \bS_t \cdot d\bomega_t + d\bxi_t
    \end{equation}
    can be solved on any interval $[s, t] \subseteq [0,1]$ as
    \begin{equation}
    \bS_t = \bS_s\mathbb{P}_{s,t} + \int_s^t d\bxi_r \mathbb{P}_{r,t},
\end{equation}
\begin{equation}
    \mathbb{P}_{s,t} = Id + \int_s^t \mathbb{P}_{s,r} d\bomega_r \in \R^{d \times d}.
\end{equation}
\end{proposition}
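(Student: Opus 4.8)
The plan is to recognize Proposition~\ref{prop:flow_body} as a reformulation of the cited solution formula, in which the linear flow $\W_{s,t}$ is exhibited concretely as right-multiplication by a single matrix $\bP_{s,t} \in \R^{d \times d}$, and to characterise this matrix not as an infinite series but as the solution of a Volterra-type fixed-point equation. Concretely, I would set
\begin{equation}
    \bP_{s,t} := \sum_{n=0}^{\infty} ~ \int\limits_{s < r_1 < \cdots < r_n < t} d\bomega_{r_1} \cdots d\bomega_{r_n},
\end{equation}
the iterated-integral (signature) series already appearing in the definition of $\W_{s,t}$, so that by construction $\W_{s,t} A = A\,\bP_{s,t}$ for every $A \in \R^{d\times d}$. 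Substituting this into the solution $\bS_t = \W_{s,t}\bS_s + \int_s^t \W_{r,t}\,d\bxi_r$ recalled from \cite{cirone2024theoretical}[Appendix E] immediately yields $\bS_t = \bS_s\,\bP_{s,t} + \int_s^t d\bxi_r\,\bP_{r,t}$, which is the first displayed identity. It therefore remains to prove that $\bP_{s,t}$ obeys $\bP_{s,t} = Id + \int_s^t \bP_{s,r}\,d\bomega_r$.

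For the fixed-point equation, the central step is to peel off the \emph{last} integration variable from each term of the series. The $n=0$ term is the empty product $Id$. For $n \geq 1$, writing $r := r_n$ and using Fubini on the simplex $\{s < r_1 < \cdots < r_{n-1} < r < t\}$ gives
\begin{equation}
    \int\limits_{s < r_1 < \cdots < r_n < t} d\bomega_{r_1}\cdots d\bomega_{r_n} = \int_s^t \left( \int\limits_{s < r_1 < \cdots < r_{n-1} < r} d\bomega_{r_1}\cdots d\bomega_{r_{n-1}} \right) d\bomega_r.
\end{equation}
Summing over $n \geq 1$ and recognising the inner bracket as the $(n-1)$-th term of the series defining $\bP_{s,r}$, the sum over $n$ reassembles $\bP_{s,r}$ inside the outer integral, producing exactly $\int_s^t \bP_{s,r}\,d\bomega_r$ and hence the claimed recursion. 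The same peeling argument, together with $\bP_{s,s}=Id$, also recovers the correct initial condition $\bS_s = \bS_s\,Id$.

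The main obstacle is analytic rather than algebraic: justifying both the convergence of the series and the interchange of the infinite sum with the integral that underpins the resummation step. Here I would invoke the standard factorial estimate for iterated integrals, $\norm{\int_{s < r_1 < \cdots < r_n < t} d\bomega_{r_1}\cdots d\bomega_{r_n}} \leq \tfrac{1}{n!}\,\norm{\bomega}_{1\text{-var};[s,t]}^{\,n}$, valid whenever $\bomega$ has finite total variation on $[s,t]$ (with the $p$-variation analogue in the genuinely rough regime). This bound gives absolute and uniform convergence of $\bP$ on compact parameter ranges, legitimising the termwise application of Fubini and the exchange of summation and integration by dominated convergence.

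As a self-contained alternative that avoids series manipulations, I would instead \emph{define} $\bP_{s,t}$ directly as the unique solution of the Volterra equation $\bP_{s,t} = Id + \int_s^t \bP_{s,r}\,d\bomega_r$ (existence and uniqueness following from Picard iteration, whose iterates are precisely the partial sums above), and then verify the solution formula by differentiation in $t$. Differentiating the integral equation gives $d_t\bP_{s,t} = \bP_{s,t}\,d\bomega_t$ and $d_t\bP_{r,t} = \bP_{r,t}\,d\bomega_t$; applying the Leibniz rule to $\bS_t = \bS_s\bP_{s,t} + \int_s^t d\bxi_r\,\bP_{r,t}$, the boundary term contributes $d\bxi_t\,\bP_{t,t} = d\bxi_t$ while the interior terms collect to $\bigl(\bS_s\bP_{s,t} + \int_s^t d\bxi_r\,\bP_{r,t}\bigr)d\bomega_t = \bS_t\,d\bomega_t$, reproducing $d\bS_t = \bS_t\,d\bomega_t + d\bxi_t$. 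This route localises the only delicate point to the differentiation-under-the-integral step for a $t$-dependent integrand, which again rests on the same variation bounds on $\bomega$.
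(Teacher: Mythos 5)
Your proposal is correct. Its primary route follows the same skeleton as the paper's appendix proof: start from the solution formula $\bS_t = \W_{s,t}\bS_s + \int_s^t \W_{r,t}\,d\bxi_r$ cited from \cite{cirone2024theoretical}, identify $\W_{s,t}$ with right-multiplication by the iterated-integral series $\bP_{s,t}$, and obtain $\bP_{s,t} = Id + \int_s^t \bP_{s,r}\,d\bomega_r$ by splitting off the $n=0$ term and resumming. The placement of effort differs, however, in a way worth flagging: the identification $\W_{s,t}A = A\,\bP_{s,t}$, which you take as holding \emph{by construction}, is precisely the step on which the paper spends its proof. The cited result delivers the flow only abstractly, as the Wronskian series $\sum_n V^{\otimes n}(\bullet)\,Sig(\bomega)^{\otimes n}_{s,t}$ attached to the vector field $V(A)\cdot B = A \cdot B$, and the paper's coordinate computation $V_{i_n,j_n}\circ\cdots\circ V_{i_1,j_1}(A) = A\cdot(\be_{i_1}\otimes\be_{j_n})\,\prod_{k=2}^{n}\delta_{j_{k-1},i_k}$ is what collapses that operator series into right multiplication by the matrix-valued series; relying on the main text's already-concrete formula is defensible in a blind setting, but if one cites only the abstract statement this translation cannot be skipped. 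Conversely, you supply exactly what the paper compresses into one line: the explicit last-variable peeling via Fubini and the factorial $1$-variation bounds justifying convergence and the interchange of sum and integral. Your second route --- defining $\bP$ as the unique Volterra/Picard solution and verifying the solution formula by the Leibniz rule --- is genuinely different from the paper and fully self-contained, avoiding both the citation and all series manipulation; its one caveat is that checking the candidate satisfies the CDE shows the equation \emph{can} be solved as claimed, whereas identifying it with \emph{the} solution $\bS$ additionally uses uniqueness for the linear CDE (standard via a Gronwall argument, and worth a sentence).
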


The proof of this result is given in Appendix \ref{app:proofs}.

\section{Parallel Flows}
\label{sec:Parallel_Flows}

We now build on the general CDE formalism by proposing the \emph{Parallel Flow} perspective, which-by interpreting chunking procedures as computations of the \emph{flows} of Proposition~\ref{prop:flow_body}-enables independent analysis of critical algorithmic components without being tied to specific architectures. As established in Proposition~\ref{prop:flow_body} the flow can be expressed as:
\begin{equation*}
    \bS_t = \bS_s\mathbb{P}_{s,t} + \int_s^t d\bxi_r \mathbb{P}_{r,t},
\end{equation*}
\begin{equation*}
    \mathbb{P}_{s,t} = Id + \int_s^t \mathbb{P}_{s,r} d\bomega_r \in \R^{d \times d}.
\end{equation*}

Given the state $\bS_s$ at time $s$, this formulation enables direct computation of $\bS_t$ for any $t > s$ without requiring intermediate states $\bS_r$ for $r \in (s, t)$, provided one has precomputed the propagator matrix $\mathbb{P}_{s,t}$ and the integrated term $\int_s^t d\bxi_r \mathbb{P}_{r,t}$. 
This perspective unlocks an efficient parallelization strategy for computing $\bS_1$:
\begin{enumerate}
    \item \textbf{Chunk.} Partition $[0,1]$ into $M$ sub-intervals $\{(t_{k-1}, t_k]\}_{k=1}^L$ using grid points $0 = t_0 < \cdots < t_L = 1$.
    
    \item \textbf{Parallel Compute.} Independently compute $\mathbb{P}_{t_{k-1},t_k}$ and $\int_{t_{k-1}}^{t_k} d\bxi_r \mathbb{P}_{r,t_k}$ for each sub-interval. These computations are fully parallel as they involve no cross-interval dependencies.
    
    \item \textbf{Scan.} Aggregate results across intervals using a parallel scan algorithm \cite{smith2023simplified}, propagating the initial condition $\bS_0$ through the composed flow.
\end{enumerate}

\noindent\textbf{Interval Selection.} The choice of partition points $\{t_m\}$ impacts both computational efficiency and numerical stability. While uniform partitions suffice for basic implementations, adaptive step-size strategies based on regularity properties of the driving path $\omega$ could optimize chunk granularity. Another interesting approach is provided by the \emph{log-ODE method}, a higher-order solver which allows to take larger steps at the cost of using higher-order iterated integrals of the path $\omega$ captured by its \emph{log-signature}. See for example \citep[Section 3.2.2]{cass2024lecture} for a description of the algorithm and \citep{morrill2021neural} for a description of how to integrate it within a continuous-time sequence model. Quantifying the trade-offs between approximation quality and computational gains through more refined adaptive partitioning or higher-order mechanism remains an open question for future work.

\vspace{0.2cm}
\noindent\textbf{Intra-Chunk Computation.}
Efficient intra-chunk computation of $\mathbb{P}_{t_{k-1},t_k}$ and $\int_{t_{k-1}}^{t_k} d\bxi_r \mathbb{P}_{r,t_k}$ becomes crucial for practical efficiency. By the parallel nature of the problem, one may assume without loss of generality that $(t_{k-1}, t_k] \equiv (0,1]$ through temporal rescaling. Here, three key factors dominate implementation: (1) the algebraic structure of the driving processes $\bomega$ and $\bxi$, (2) the path representation choice (\emph{e.g.}, piecewise-linear, polynomial, or piecewise-constant approximations), and (3) the numerical methods employed for evaluating the flow operators. These design decisions ultimately determine the computational complexity-accuracy tradeoff in implementations, and as such are of critical importance. In particular one should make sure the structure of the objects allows the computation of the flow operators through the highly efficient tensor operations of modern hardware. In this paper we will focus mainly on 1) and 3) and we leave the choice of path interpolator to future work.

\subsection{Low-Rank Delta Rule}
\label{subsec:high_rank}

The parallel flow formulation motivates architectural choices for the driving processes $\bxi$ and $\bomega$ that balance expressiveness with computational efficiency. We present a concrete realization through \emph{Low-Rank Delta Rules}, demonstrating how low-rank matrix parameterizations enable both theoretically richer representations, as recently proved by \citet{grazzi2024unlockingstatetrackinglinearrnns}, and inherit the hardware-friendly implementations of the rank-1 case of \citet{yang2024parallelizinglineartransformersdelta}.

Under this light we propose to address the first two key factors determining the performance of intra-chunk computation as follows :
\begin{itemize}
    \item \textbf{Driver Parameterization:} For matrix-valued paths $\bA, \tilde{\bA}, \bB: [0,1] \to \R^{d \times R}$, we define the drivers as low-rank ($R$) temporal interactions:
    \begin{equation}
        d\bomega_t = \bA_t \bB^\top_t dt \in \R^{d \times d}, \quad
        d\bxi_t = \tilde{\bA}_t \bB^\top_t dt \in \R^{d \times d}
    \end{equation}
    Crucially $\bB$ is shared by both drivers, this will allow for an efficiently computable, compact representation of the flow as shown in Proposition \ref{prop:continuous_AB_flow}.
    
    \item \textbf{Discretization Scheme:} We consider piecewise-linear drivers on a grid $0 = t_0 < \cdots < t_L = 1$. This induces discrete updates (with $\Delta t$ terms absorbed into $\bB$), where $0 \leq k < L$:
    \begin{equation}\label{eqn:discretization}
        \Delta \bomega_{t_k} = \bA_{t_k} \bB^\top_{t_k}, \quad
        \Delta \bxi_{t_k} = \tilde{\bA}_{t_k} \bB^\top_{t_k}
    \end{equation}
\end{itemize}

\begin{remark}
This constitutes a higher-rank generalization of the DeltaNet architecture from \cite{yang2024parallelizinglineartransformersdelta}, where the original rank-1 formulation corresponds to the special case $R=1$ with $\bA_t, \tilde{\bA}_t \in \R^{d}$ and $\bB_t \in \R^{d}$ given by 
\begin{equation}
    \bA_t = \beta_t \bk_t, 
    \quad
    \tilde{\bA}_t = -\beta_t \bv_t, 
    \quad 
    \bB_t = - \bk_t
\end{equation}
with both $\beta_t: [0,1] \to \R$, and  $\bk, \bv: [0,1] \to \R^d$ data-dependent paths.
\end{remark}

The following result, proved in Appendix~\ref{app:parallelFlows_proofs}, presents in continuous time a useful decomposition of the flow:
\begin{proposition}
\label{prop:continuous_AB_flow}
    For matrix-valued paths $\bA, \tilde \bA,  \bB: [0,1] \to \R^{d \times R}$ let the drivers be given by
\begin{equation}
    d \bomega_t =  \bA_t \bB^\top_t dt \in \mathbb{R}^{d \times d},
    \quad
    d \bxi_t =  \tilde\bA_t \bB^\top_t dt \in \mathbb{R}^{d \times d}
\end{equation}
then the flow can be written as 
\begin{equation}\label{eqn:HR_DeltaRule_flow}
    \bS_t 
    = \bS_s + \int_s^t \left( \bS_s \bW_{s,r} + \bU_{s,r} \right) \bB_r^\top dr
\end{equation}
where $\bW_{s,t}$ and $\bU_{s,t}$ are the solution of the following integral equations
\begin{equation}
    \bW_{s,t} =  \bA_t + \int_s^t \bW_{s,r} \bB^\top_r \bA_t ~dr,
\end{equation}
\begin{equation}
    \bU_{s,t} =
    \tilde\bA_t + \int_s^t \bU_{s, r} \bB_r^\top \bA_t dr.
\end{equation}
\end{proposition}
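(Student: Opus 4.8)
The plan is to treat the claimed expression for $\bS_t$ as the \emph{definition} of a candidate path, verify that it solves the governing CDE $d\bS_t = \bS_t \cdot d\bomega_t + d\bxi_t$ with the correct initial value $\bS_s$, and then invoke uniqueness of the linear CDE solution (already established in Proposition~\ref{prop:flow_body}) to identify the candidate with the true flow. The preliminary point is that $\bW_{s,t}, \bU_{s,t} \in \R^{d \times R}$ are well defined: fixing $s$ and differentiating each Volterra equation in its upper limit turns it into a linear ODE in $t$ with continuous coefficient $\bA_t \bB_t^\top$, so both solutions exist and are unique under boundedness of $\bA, \tilde\bA, \bB$, and hence so does the candidate $\bS_t$.

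With that in place I would differentiate the candidate in $t$ (legitimate since the integrand is continuous), obtaining $\partial_t \bS_t = (\bS_s \bW_{s,t} + \bU_{s,t})\bB_t^\top$, while at $t = s$ the integral vanishes and $\bS_s$ is recovered. Since the CDE reads $\partial_t \bS_t = (\bS_t \bA_t + \tilde\bA_t)\bB_t^\top$, it suffices to prove the algebraic identity
\begin{equation*}
\bS_s \bW_{s,t} + \bU_{s,t} = \bS_t \bA_t + \tilde\bA_t .
\end{equation*}
I would establish this by substituting the candidate formula for $\bS_t$ into the right-hand side, which gives $\bS_t \bA_t = \bS_s \bA_t + \bS_s \int_s^t \bW_{s,r} \bB_r^\top \bA_t\, dr + \int_s^t \bU_{s,r} \bB_r^\top \bA_t\, dr$. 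The two integrals are exactly the correction terms in the Volterra equations for $\bW$ and $\bU$, so they collapse to $\bW_{s,t} - \bA_t$ and $\bU_{s,t} - \tilde\bA_t$; after the $\pm \bS_s \bA_t$ terms cancel, one is left precisely with $\bS_s \bW_{s,t} + \bU_{s,t} - \tilde\bA_t$, which rearranges to the desired identity. Feeding this back into the derivative yields $\partial_t \bS_t = (\bS_t \bA_t + \tilde\bA_t)\bB_t^\top = \bS_t\, d\bomega_t + d\bxi_t$, so the candidate solves the CDE and, by uniqueness, equals the flow.

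The main obstacle is the identity above: it superficially looks circular, since it uses $\bS_t$ to constrain $\bS_t$, but the argument is sound because $\bS_t$ here is the fully explicit candidate, and the real content is recognizing that the shared structure $\int_s^t (\cdot)\,\bB_r^\top \bA_t\, dr$ in both Volterra equations makes the double integrals telescope. As a cleaner conceptual companion I would add a short remark deriving the reduction $\bW_{s,t} = \mathbb{P}_{s,t}\bA_t$ directly, by right-multiplying $\mathbb{P}_{s,t} = Id + \int_s^t \mathbb{P}_{s,r}\bA_r\bB_r^\top\, dr$ by $\bA_t$; this exhibits $\bW$ as the low-rank $d \times R$ reduction of the $d \times d$ propagator and explains why the rank-$R$ structure of the drivers is inherited by the flow. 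The remaining points — well-posedness of the $\bW, \bU$ equations and the interchange of differentiation and integration — are routine consequences of continuity of $\bA, \tilde\bA, \bB$.
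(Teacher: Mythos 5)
Your proof is correct, but it takes a genuinely different route from the paper. The paper argues \emph{constructively}: starting from the representation $\bS_t = \bS_s\mathbb{P}_{s,t} + \int_s^t d\bxi_r\,\mathbb{P}_{r,t}$ of Proposition~\ref{prop:flow_body}, it \emph{defines} $\bW_{s,t} := \mathbb{P}_{s,t}\bA_t$ and $\bU_{s,t} := \tilde\bA_t + \int_s^t d\bxi_r\,\mathbb{P}_{r,t}\bA_t$ and checks that these satisfy the stated Volterra equations; the only nontrivial step is a Fubini-type exchange of the order of integration in the $\bU$ term, which is exactly where the assumption that $\bB$ is shared by both drivers enters. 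You instead run a guess-and-verify argument: take the claimed formula as a candidate, show it satisfies $\partial_t\bS_t = (\bS_t\bA_t + \tilde\bA_t)\bB_t^\top$ with the right initial value (the telescoping of the two Volterra correction terms is the same algebra the paper exploits, just read in the opposite direction), and conclude by uniqueness. Your route avoids the Fubini swap entirely and is arguably shorter, but it buys this at the cost of two inputs the paper does not need: well-posedness of the $\bW,\bU$ equations and uniqueness for the CDE (the latter is routine here since the drivers are absolutely continuous in $t$, though Proposition~\ref{prop:flow_body} itself only asserts a representation, not uniqueness). The paper's derivation, by contrast, produces $\bW$ and $\bU$ as explicit low-rank reductions of the propagator, which is conceptually where they come from. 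One technical slip on your side: your well-posedness argument "differentiate the Volterra equation in its upper limit" is not legitimate as stated, because $\bA_t$ appears as a multiplicative inhomogeneity and need not be differentiable; the correct fix is the factorization you relegate to a remark, namely $\bW_{s,t} = \bigl(Id + \int_s^t \bW_{s,r}\bB_r^\top\,dr\bigr)\bA_t = \mathbb{P}_{s,t}\bA_t$, which reduces existence and uniqueness of $\bW$ (and similarly $\bU$) to the linear ODE for $\mathbb{P}$ — so that remark should be promoted to the load-bearing step of your preliminary point.
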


\paragraph{Discretization and Intra-Chunk computation}
As previously discussed, thanks to the parallel nature of the problem and through temporal rescaling one may assume without loss of generality the chunck to be the entire interval $(0,1]$ where, with a slight abuse of notation by writing $\bW_{t} := \bW_{0,t}$ and $\bU_{t} := \bU_{0,t}$, we have
\begin{equation}
\label{eqn:chunk_discrete_flow_1}
    \bS_1
    = \bS_0 + \int_0^1 \left( \bS_0 \bW_{r} + \bU_{r} \right) \bB_r^\top dr \in \R^{d \times d}
\end{equation}
\begin{equation}
\label{eqn:chunk_discrete_flow_2}
    \bW_{t} =  \bA_t + \int_0^t \bW_{r} \bB^\top_r \bA_t ~dr \in \R^{d \times R},
\end{equation}
\begin{equation*}
    \bU_{t} =
    \tilde\bA_t + \int_0^t \bU_{r} \bB_r^\top \bA_t dr \in \R^{d \times R}.
\end{equation*}

Under discretization (\ref{eqn:discretization}), on the grid $0 = t_0 < \cdots < t_L = 1$, the dynamics can be discretised as 
\begin{equation}
    \bS_{t_{k+1}} = \bS_{t_k} + \bS_{t_k} \cdot \bA_{t_k}\bB_{t_{k}}^\top +  \tilde \bA_{t_k}\bB_{t_{k}}^\top.
\end{equation}
and the \emph{flow} equations (\ref{eqn:chunk_discrete_flow_1}) and (\ref{eqn:chunk_discrete_flow_2}) become
\begin{equation}
    \bS_{1} = \bS_{0} 
    + \sum_{k = 0}^{L-1} 
    \left( \bS_{0}  \bW_{t_k} + \bU_{t_k} \right) \bB^\top_{t_k}
\end{equation}
\begin{equation}
\label{equn:discrete_W}
    \bW_{t_{k}} =  \bA_{t_{k}} + \sum_{m = 0}^{k-1} \bW_{t_m} \bB^\top_{t_{m}} \bA_{t_k},
\end{equation}
\begin{equation*}
    \bU_{t_{k}} =
    \tilde\bA_{t_k} + \sum_{m = 0}^{k-1} \bU_{t_m} \bB_{t_m}^\top \bA_{t_k}.
\end{equation*}

These discretized equations can be completely written in terms of highly-efficient tensor operations:
\begin{theorem}
\label{theo:high_rank_delta}
    Define, with a slight abuse of notation, the tensors $\bA, \tilde\bA, \bB, \bW, \bU \in \R^{(L \times R) \times d}$ defined from the above equations and with entries $[ \square ]_{k, i}^{m} = [ \square_{t_{k-1}} ]_{m}^i$. Then
    \begin{equation}
        \bS_1 = \bS_{0} + (\bU + \bW \bS_0^\top)^\top \bB
    \end{equation}
    \begin{equation*}
        \bW = \bA + (\bM \odot \bA\bB^\top) \bW, \quad \bU = \tilde\bA + (\mathbf{M} \odot \bA \bB^\top) \bU
    \end{equation*}
    Where $\bM \in \R^{(L \times R) \times (L \times R)}$ has entries $[\bM]_{t, i}^{s, j} = \mathbb{I}(s < t)$ and composition of tensors corresponds to contraction of corresponding co-variant and contra-variant indices \emph{e.g.}
    \[
     [\bA\bB^\top]_{k, i}^{k', i'}
     := \sum_{m=1}^d [\bA]_{k, i}^{m} [\bB^\top]_m^{k', i'}
     = \sum_{m=1}^d [\bA]_{k, i}^{m} [\bB]^m_{k', i'}
    \]
    
    Here $\odot$ stads for component-wise product. 
    In particular the \emph{implicit} equations defining $\bW$ and $\bU$ can be \emph{explicitly} solved as 
    \begin{equation}
        \bW = (\mathbf{Id} - \bM \odot \bA\bB^\top)^{-1} \bA,
    \end{equation}
    \begin{equation}
        \bU = (\mathbf{Id} - \bM \odot \bA\bB^\top)^{-1} \tilde\bA
    \end{equation}
    where $[\mathbf{Id}]_{s, i}^{t, j} = \delta_s^t \delta_i^j$ and $(\mathbf{Id} - \bM \odot \bA\bB^\top)^{-1}$ is the inverse of $(\mathbf{Id} - \bM \odot \bA\bB^\top) \in \R^{(L \times R) \times (L \times R)}$.
\end{theorem}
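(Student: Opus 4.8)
The plan is to reduce the statement to three componentwise identities plus one structural fact, all obtained by direct index chasing under the conventions fixed in the statement. Throughout I let the time index run over $k \in \{1,\dots,L\}$ so that the slice at tensor-position $k$ carries the matrix evaluated at $t_{k-1}$, i.e. $[\square]_{k,i}^m = [\square_{t_{k-1}}]_{m}^i$, and I read each object as an ordinary $(LR)\times d$ or $(LR)\times(LR)$ matrix once the composite index $(k,i)$ is flattened. The only things to keep track of are which index is contracted at each product and the effect of the two transposes appearing in the readout.

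First I would verify the recurrences for $\bW$ and $\bU$. Expanding the right-hand side of $\bW = \bA + (\bM \odot \bA\bB^\top)\bW$ componentwise gives
\begin{equation*}
[(\bM\odot\bA\bB^\top)\bW]_{k,i}^m = \sum_{k',j} \mathbb{I}(k'<k)\Big(\sum_{m'} [\bA]_{k,i}^{m'}[\bB]_{k',j}^{m'}\Big)[\bW]_{k',j}^m .
\end{equation*}
Translating back through $[\bA]_{k,i}^{m'}=[\bA_{t_{k-1}}]_{m'}^i$, $[\bB]_{k',j}^{m'}=[\bB_{t_{k'-1}}]_{m'}^j$ and $[\bW]_{k',j}^m=[\bW_{t_{k'-1}}]_{m}^j$, the inner contraction over $m'$ and $j$ reassembles exactly the matrix product $\bW_{t_{k'-1}}\bB_{t_{k'-1}}^\top\bA_{t_{k-1}}$, while the mask restricts the outer sum to $k'<k$. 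This matches \eqref{equn:discrete_W} termwise, so the tensor equation is equivalent to the discrete recurrence; the identical computation with $\bA$ replaced by $\tilde\bA$ in the affine term settles $\bU$.

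Next I would verify the readout $\bS_1 = \bS_0 + (\bU + \bW\bS_0^\top)^\top\bB$. Reading $\bW$ as an $(LR)\times d$ matrix and contracting the $d$-index with $\bS_0^\top$ gives $[\bW\bS_0^\top]_{(k,i),n} = \sum_m [\bW_{t_{k-1}}]_{mi}[\bS_0]_{nm} = [\bS_0\bW_{t_{k-1}}]_{ni}$, so the bracket equals $[\bS_0\bW_{t_{k-1}}+\bU_{t_{k-1}}]_{ni}$. The outer transpose then swaps $(k,i)\leftrightarrow n$, and the final contraction against $\bB$ over the flattened index $(k,i)$ produces $\sum_{k,i}[\bS_0\bW_{t_{k-1}}+\bU_{t_{k-1}}]_{ni}[\bB_{t_{k-1}}]_{n'i}$. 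After shifting $k\mapsto k+1$ this is precisely $\sum_{k=0}^{L-1}(\bS_0\bW_{t_k}+\bU_{t_k})\bB_{t_k}^\top$, the summand in the discrete flow, establishing the identity.

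Finally, for the explicit solution I would argue invertibility structurally. Writing $\mathbf{N} := \bM\odot\bA\bB^\top$, the mask $[\bM]_{t,i}^{s,j}=\mathbb{I}(s<t)$ makes $\mathbf{N}$ strictly block-lower-triangular once the composite index is ordered by time first: its $(t,s)$ block vanishes unless $s<t$, so all diagonal blocks are zero. Hence $\mathbf{N}$ is nilpotent with $\mathbf{N}^{L}=0$, $(\mathbf{Id}-\mathbf{N})$ is invertible, and the implicit equations $\bW=\bA+\mathbf{N}\bW$, $\bU=\tilde\bA+\mathbf{N}\bU$ have the unique solutions $\bW=(\mathbf{Id}-\mathbf{N})^{-1}\bA$ and $\bU=(\mathbf{Id}-\mathbf{N})^{-1}\tilde\bA$, with $(\mathbf{Id}-\mathbf{N})^{-1}=\sum_{n=0}^{L-1}\mathbf{N}^{n}$ a terminating Neumann series. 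I expect the main obstacle to be purely notational: keeping the covariant/contravariant bookkeeping and the two transposes consistent so that the tensor contractions reproduce the matrix products $\bW_{t_p}\bB_{t_p}^\top\bA_{t_k}$ and the readout sum exactly; there is no analytic difficulty once the conventions are pinned down.
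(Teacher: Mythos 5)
Your proposal is correct, and the first two steps---verifying the tensor equations for $\bW$, $\bU$ and the readout identity by componentwise index chasing---are essentially identical to the paper's own proof, which performs the same translation between the discrete recurrences and the flattened $(LR)$-indexed objects (just in the opposite direction: the paper starts from the discrete equations and assembles the tensor form, you expand the tensor form and recover the discrete equations). Where you genuinely diverge is the final invertibility step. The paper proves existence of the inverse by invoking its Triangular Tensor Inversion proposition: it checks that $\bC = \mathbf{Id} - \bM \odot \bA\bB^\top$ has vanishing blocks for $s>t$ and diagonal blocks equal to $Id \in GL(R)$, then applies a block forward-substitution formula (justified by identification with classical block-triangular matrix inversion). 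You instead observe that $\mathbf{N} = \bM \odot \bA\bB^\top$ is \emph{strictly} block lower triangular---its diagonal blocks vanish because $\mathbb{I}(t<t)=0$---hence nilpotent with $\mathbf{N}^L = 0$, so $(\mathbf{Id}-\mathbf{N})^{-1} = \sum_{n=0}^{L-1}\mathbf{N}^n$ is a terminating Neumann series. Your argument is more elementary and self-contained, and it delivers uniqueness of the solutions to the implicit equations for free, which the paper leaves implicit. What it does not deliver is the constructive forward-substitution recursion for the inverse, which the paper needs downstream: the $\mathcal{O}(L^3R^3)$ sequential and $\mathcal{O}(L^2R)$ parallel complexity claims, and the implementation in the accompanying code, rest on that explicit recursion rather than on abstract invertibility. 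So your route suffices for the theorem as stated, but the paper's choice of lemma is doing additional algorithmic work that a Neumann-series argument alone would not replace.
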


Note that the triangular nature of the matrix to invert allows for an efficient \emph{forward-substitution} algorithm,  which can be computed sequentially in $\mathcal{O}(L^3R^3)$ steps, but importantly also in parallel with lowered $\mathcal{O}(L^2R)$ complexity:
\begin{proposition}[Triangular Tensor Inversion]
    Let $\bC \in \R^{(T \times R) \times (T \times R)}$ be such that 
    \begin{equation*}
        s > t \implies [ \bC ]_{t, i}^{s, j} = 0, \quad \text{and } \
         [ \bC ]_{t, \cdot}^{t, \cdot} \in GL(R), \  \forall t.
    \end{equation*}
    Define the tensor $\mathbf{D} \in \R^{(T \times R) \times (T \times R)}$ as follows:
    
    for $s > t$
    \begin{equation*}
        \mathbf{D}_{t, \cdot}^{s, \cdot} = \mathbf{0} \in \R^{R \times R},
    \end{equation*}
    for $s=t$
    \begin{equation*}
        \mathbf{D}_{t, \cdot}^{t, \cdot} = ( \mathbf{C}_{t, \cdot}^{t, \cdot} )^{-1} \in \R^{R \times R},
    \end{equation*}
    and for $s < t$
    \begin{align*}
        \mathbf{D}_{t, \cdot}^{s, \cdot} &= - 
        ( \mathbf{C}_{t, \cdot}^{t, \cdot} )^{-1} 
        \left( 
        \sum_{r=s}^{t-1} 
        \mathbf{C}_{t, \cdot}^{r, \cdot}
        \mathbf{D}_{r, \cdot}^{s, \cdot} 
        \right) \\
        &= - 
        \bD_{t, \cdot}^{t, \cdot}
        \left( 
        \sum_{r=s}^{t-1} 
        \mathbf{C}_{t, \cdot}^{r, \cdot}
        \mathbf{D}_{r, \cdot}^{s, \cdot} 
        \right),
    \end{align*}
    Then, $\mathbf{D}$ is the the inverse of $\bC$ for tensor composition, \emph{i.e.} $$\bD \bC = \mathbf{Id}.$$
\end{proposition}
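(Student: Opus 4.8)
The plan is to reduce this tensor identity to a standard fact about block-triangular matrices. First I would fix the flattening bijection $(t,i) \mapsto R(t-1)+i$, which identifies a tensor in $\R^{(T\times R)\times(T\times R)}$ with an ordinary matrix in $\R^{TR\times TR}$, the lower pair $(t,i)$ indexing rows and the upper pair $(s,j)$ indexing columns. Under this identification the contraction defining tensor composition, $[\bD\bC]_{t,i}^{s,j} = \sum_{r,k}[\bD]_{t,i}^{r,k}[\bC]_{r,k}^{s,j}$, is exactly ordinary matrix multiplication (hence associative), $\mathbf{Id}$ is the $TR\times TR$ identity, and the hypothesis $s>t \Rightarrow [\bC]_{t,\cdot}^{s,\cdot}=0$ says precisely that $\bC$ is block lower-triangular with $R\times R$ blocks, its diagonal blocks being $\bC_{t,\cdot}^{t,\cdot}\in GL(R)$. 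By construction $\bD$ has the same triangular support.

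Second, I would record invertibility. Because $\bC$ is block lower-triangular with diagonal blocks in $GL(R)$, its determinant is $\prod_t \det(\bC_{t,\cdot}^{t,\cdot}) \ne 0$, so $\bC$ is an invertible square matrix with a unique two-sided inverse. This is the step that lets me recover the stated \emph{left}-inverse identity $\bD\bC=\mathbf{Id}$ from the more natural \emph{right}-inverse identity: once I show $\bC\bD=\mathbf{Id}$, uniqueness of the inverse of an invertible square matrix forces $\bD=\bC^{-1}$, and hence $\bD\bC=\mathbf{Id}$ as well.

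Third, I would verify $\bC\bD=\mathbf{Id}$ block by block. For fixed $s$, the entries $\bD_{t,\cdot}^{s,\cdot}$ are well defined by induction on $t\ge s$ (base case $t=s$ giving $(\bC_{s,\cdot}^{s,\cdot})^{-1}$, the step referencing only $\bD_{r,\cdot}^{s,\cdot}$ with $r<t$). Computing $[\bC\bD]_{t,\cdot}^{s,\cdot}=\sum_r \bC_{t,\cdot}^{r,\cdot}\bD_{r,\cdot}^{s,\cdot}$, the triangular supports of $\bC$ and $\bD$ restrict the sum to $s\le r\le t$. I then split into three cases: if $s>t$ the sum is empty and gives $\mathbf{0}$; if $s=t$ only $r=t$ survives, giving $\bC_{t,\cdot}^{t,\cdot}(\bC_{t,\cdot}^{t,\cdot})^{-1}=I_R$; and if $s<t$, isolating the $r=t$ term reduces the block sum to $\bC_{t,\cdot}^{t,\cdot}\bD_{t,\cdot}^{s,\cdot}+\sum_{r=s}^{t-1}\bC_{t,\cdot}^{r,\cdot}\bD_{r,\cdot}^{s,\cdot}$, which the defining recursion makes vanish identically. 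This matches $[\mathbf{Id}]_{t,\cdot}^{s,\cdot}$ in all three cases, so $\bC\bD=\mathbf{Id}$, and with the previous step $\bD\bC=\mathbf{Id}$ follows.

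The computations are entirely routine; the only point demanding care is the bookkeeping of composition order. The recursion is \emph{adapted} to the right-inverse equation $\bC\bD=\mathbf{Id}$ (the factor $\bC_{t,\cdot}^{r,\cdot}\bD_{r,\cdot}^{s,\cdot}$ appears in exactly that order), so a direct term-by-term verification yields $\bC\bD=\mathbf{Id}$ rather than the stated $\bD\bC=\mathbf{Id}$; the left-inverse identity is then obtained not by hand but from invertibility of the square matrix $\bC$. Keeping the block supports and summation ranges straight throughout is the main thing that could go wrong.
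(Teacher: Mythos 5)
Your proof is correct, and it follows the same overall reduction as the paper --- flattening $\R^{(T\times R)\times(T\times R)}$ to $\R^{TR\times TR}$ so that tensor composition becomes matrix multiplication and the hypothesis becomes block lower-triangularity with invertible diagonal blocks. The difference is what happens after the reduction: the paper stops there and invokes the classical inversion formula for block-triangular matrices (citing Meyer), whereas you prove that formula from scratch, by a direct block-by-block verification that $\bC\bD=\mathbf{Id}$ followed by the determinant argument ($\det\phi(\bC)=\prod_t\det\bC_{t,\cdot}^{t,\cdot}\neq 0$) to upgrade the right inverse to the stated left-inverse identity $\bD\bC=\mathbf{Id}$. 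Your version is longer but self-contained, and it surfaces a genuine subtlety the paper glosses over: the recursion defining $\bD$ (with blocks composed in the order $\bC_{t,\cdot}^{r,\cdot}\bD_{r,\cdot}^{s,\cdot}$) is adapted to the equation $\bC\bD=\mathbf{Id}$, not to the equation $\bD\bC=\mathbf{Id}$ that the proposition asserts, so some bridge --- here, uniqueness of the two-sided inverse of an invertible square matrix --- is actually needed to reach the stated conclusion. The paper's citation-based proof absorbs this silently into the classical result; your argument makes it explicit, which is arguably preferable given that the left-inverse form is exactly what the application (solving $(\mathbf{Id}-\bM\odot\bA\bB^\top)\bW=\bA$ for $\bW$) requires.
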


Proofs for the results above can be found in Appendix~\ref{app:parallelFlows_proofs}.

Figure~\ref{code_tensorInv} presents our PyTorch \cite{paszke2017automatic} implementation for computing $\bS_1$. As summarized in Table~\ref{tab:comparison}, the algorithm requires $\mathcal{O}(L^2R^2 + LRd + d^2)$ memory complexity. The computational complexity is $\mathcal{O}(L^2 R (d^2 + Rd + LR^2)))$ in sequential execution, which can theoretically be further reduced to $\mathcal{O}(L^2R + d)$ via parallel computation.

\subsection{A signature kernel inspired algorithm}

We introduce a new algorithm, drawing inspiration from signature kernel techniques \cite{salvi2021signature}, that does not require an explicit tensor inversion, and thus exhibits theoretically superior scaling behavior with $\mathcal{O}(L^2 R (d^2 + Rd + R)))$ sequential and $\mathcal{O}(LR + d)))$ parallel complexities, furthermore maintaining the same $\mathcal{O}(L^2R^2 + LRd + d^2)$ memory footprint (see Table \ref{tab:comparison}). 
A code snippet of a PyTorch implementation is given in Figure \ref{sigDelta_code_optimal}.

\begin{theorem}
\label{theo:sig_delta}
    The solution to the implicit equation 
    \begin{equation}
        \bW_{t_{k}} =  \bA_{t_{k}} + \sum_{m = 0}^{k-1} \bW_{t_m} \bB^\top_{t_{m}} \bA_{t_k},
    \end{equation}
    is given by the diagonal elements $\bW(t_k, t_k)$ of the system
    defined by
    \begin{equation*}
        \bW(t_0, t_k) = \bA_{t_k}, 
    \end{equation*}
    \begin{equation*}
        \quad \bW(t_{k+1},t_{k+1}) = \bW(t_k,t_{k+1}) + \bW(t_k,t_k) \bB_{t_k}^\top \bA_{t_{k+1}}
    \end{equation*}
    and for $m < k$
    \begin{dmath*}
        \bW(t_{m+1}, t_{k+1}) =  \bW(t_{m},t_{k+1}) + \bW(t_{m+1},t_{k}) - \bW(t_{m},t_{k}) + \bW(t_{m},t_{m}) 
        \bB^\top_{t_{m}} ( \bA_{t_{k+1}} - \bA_{t_k}).
    \end{dmath*}
\end{theorem}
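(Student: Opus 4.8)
The plan is to identify the two-parameter family $\bW(t_m, t_k)$ with an explicit \emph{partial sum} of the series defining $\bW_{t_k}$, and then verify that this ansatz satisfies the prescribed initialisation and recursions. Concretely, let $\bW_{t_j}$ denote the solution of the implicit equation (which is well defined, since that equation determines $\bW_{t_k}$ forward from $\bW_{t_0},\dots,\bW_{t_{k-1}}$), and define for $0 \le m \le k$
\begin{equation*}
    \bW(t_m, t_k) := \bA_{t_k} + \sum_{j=0}^{m-1} \bW_{t_j} \bB^\top_{t_j} \bA_{t_k}.
\end{equation*}
Two sanity checks are immediate: the empty sum at $m=0$ gives $\bW(t_0, t_k) = \bA_{t_k}$, matching the prescribed base row, while setting $m=k$ reproduces exactly the implicit equation, so $\bW(t_k,t_k) = \bW_{t_k}$ is precisely the diagonal quantity we want to recover.

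First I would argue that the base row together with the two recursions determine the whole family $\{\bW(t_m,t_k)\}_{m\le k}$ uniquely, so that checking the ansatz against these relations suffices. Sweeping the first index upward, the diagonal recursion produces the starting entry $\bW(t_{m+1},t_{m+1})$ from level $m$, after which the off-diagonal recursion produces $\bW(t_{m+1},t_k)$ for $k>m+1$ from its same-level predecessor $\bW(t_{m+1},t_{k-1})$ and from level-$m$ entries. Hence the recursive system is well-posed on $\{m\le k\}$ and has a unique solution; if the explicit ansatz obeys all three relations, uniqueness forces equality, and in particular equality on the diagonal, which is the claim.

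The verifications are then direct substitutions. For the diagonal recursion, plugging the ansatz into the right-hand side and using $\bW(t_k,t_k)=\bW_{t_k}$ collapses the expression to $\bA_{t_{k+1}} + \sum_{j=0}^{k} \bW_{t_j}\bB^\top_{t_j}\bA_{t_{k+1}}$, which is $\bW(t_{k+1},t_{k+1})$. For the off-diagonal relation, substituting the ansatz makes the bare $\bA_{t_k}$ terms cancel and the two nested sums weighted by $\bA_{t_k}$ telescope to the single term $\bW_{t_m}\bB^\top_{t_m}\bA_{t_k}$; this combines with the increment $\bW_{t_m}\bB^\top_{t_m}(\bA_{t_{k+1}}-\bA_{t_k})$ so that every $\bA_{t_k}$ contribution disappears and the surviving $\bA_{t_{k+1}}$ terms reassemble into $\bA_{t_{k+1}} + \sum_{j=0}^{m}\bW_{t_j}\bB^\top_{t_j}\bA_{t_{k+1}} = \bW(t_{m+1},t_{k+1})$.

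The substantive content is not this algebra but the correct guess of the interpolating quantity. The off-diagonal recursion has the shape $\bW(t_{m+1},t_{k+1}) = \bW(t_m,t_{k+1}) + \bW(t_{m+1},t_k) - \bW(t_m,t_k) + (\text{increment})$, that is, the finite-difference stencil for a mixed second derivative, which is exactly the discretised Goursat form underlying signature-kernel solvers. I expect the main obstacle to be recognising, guided by that analogy, that $\bW(t_m,t_k)$ must be the length-$m$ partial sum of the defining series frozen at the second argument $t_k$; once this interpretation is in hand, the telescoping identities above complete the proof routinely.
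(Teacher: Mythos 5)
Your proposal is correct and is in essence the paper's own argument: the paper proves, by a double induction on $k$ and then $n$, exactly the interpolating identity $\bW(t_n,t_k) = \bA_{t_k} + \sum_{m=0}^{n-1}\bW(t_m,t_m)\bB^\top_{t_m}\bA_{t_k}$ that you take as your ansatz, and its inductive step is precisely your telescoping computation. The only difference is the direction of verification — the paper works forward from the system's recursions to show the diagonal satisfies the implicit equation (whose solution is trivially unique), whereas you substitute the implicit solution's partial sums into the system and close with uniqueness of the system's solution; this is a harmless reorganization, and your sweep argument establishing that well-posedness is valid.
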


Refer to Appendix~\ref{app:sig_algo_proofs} for the proof.

The system described in Theorem \ref{theo:sig_delta} closely resembles an equation that defines a key concept in rough path theory, the \emph{signature kernel}, for which a similar parallelisable algorithm was provided by \citet{salvi2021signature}. We note that this kernel has also been identified as the scaling limit of a large class of SSMs with Gaussian vector fields by \citet{cirone2023neural}.

Building on this analogy, we can achieve parallelization of the dynamics by reordering the computation. Rather than processing the equation row-by-row or column-by-column, we update the solution grid along its “antidiagonals.” 
Since there are no data dependencies within an antidiagonal, all its elements can be updated simultaneously.
This breaks the quadratic complexity in $L$, that becomes $\mathcal{O}(L)$, thus achieving linear memory and linear time complexity for the computation of the dynamics.

However, we observe a gap between theoretical expectations and our practical implementation: the expected complexity improvements do not materialize due to technical constraints imposed by Triton, our chosen GPU kernel framework. These limitations arise from fundamental restrictions in tensor operation support within modern accelerator programming paradigms, as we will explain in more details in the experimental section \ref{sec:experiments}.

\subsection{An alternative representation of the flow as product of exponentials} 

We note that on a discrete grid $0 = t_0 < \cdots < t_L = 1$ the following representation of the flow holds
\begin{equation}
\label{discrete_flow_prod}
    \mathbb{P}_{t_k,t_{m}} = \prod_{i=k}^{m-1}\exp(\Delta \bomega_{t_i}) \in \mathbb{R}^{d\times d}
\end{equation}
where $\Delta \bomega_{t_k} = \bomega_{t_{k+1}} - \bomega_{t_k}$ is an increment.

The discretization of the flow $\bP$ via Euler-Scheme, for example used in \cref{equn:discrete_W}, corresponds to a naive approximation 
\[
\exp(\Delta\bomega) = Id + \Delta\bomega
\]
in \cref{discrete_flow_prod}.
Note however how, for $\bA, \bB \in \R^{d \times R}$, one has
\begin{dmath*}
    \exp(\bA \bB^\top) 
    = Id + \sum_{k=1}^{\infty} \frac{1}{k!} (\bA \bB^\top)^k
    = Id + \bA \left( \sum_{k=1}^{\infty} \frac{1}{k!} (\bB^\top \bA)^{k-1} \right) \bB^\top
    = Id + \bA (\bB^\top\bA)^{-1} \left( \exp(\bB^\top \bA) - Id \right) \bB
\end{dmath*}
where $\bB^\top\bA \in \R^{R \times R}$.  If the matrices are rank-1 the previous computations simplify dramatically. In effect, when $R=1$ we can compute the matrix exponential, without explicitly matrix exponentiations, as 
\begin{dmath*}
    \exp(\ba \bb^\top) = Id + \left( \frac{e^{\sprod{\bb}{\ba} - 1}}{\sprod{\bb}{\ba}} \right) \ba \bb^\top
    = Id + \left( \frac{e^{tr(\ba \bb^\top) - 1}}{tr(\ba \bb^\top)} \right) \ba \bb^\top
\end{dmath*}

Thus in case $\Delta\bomega_{t_k}$ is always of rank 1, one ends up with the \emph{exact} formula
\begin{equation}
    \mathbb{P}_{t_k,t_{m}} = 
    \prod_{i=k}^{m-1}\left( Id - \left( \frac{e^{tr(\bomega_{t_i}) - 1}}{tr(\bomega_{t_i})} \right) \bomega_{t_i} \right).
\end{equation}

\section{Numerics}
\label{sec:experiments}

While Theorem~\ref{theo:sig_delta} demonstrates an $L$-factor complexity improvement over the tensor-inversion approach of Theorem~\ref{theo:high_rank_delta}, in both sequential and parallel regimes, practical realization of these gains remains elusive.
This discrepancy stems from fundamental constraints in the Triton programming framework \cite{triton_paper}, which we adopted to ensure fair comparison with the original DeltaNet implementation \cite{yang2024parallelizinglineartransformersdelta, yang2024fla} on top of which we built the low-rank, tensor-inversion extension.

The crux of the implementation bottleneck lies in Triton's lack of native support for tensor-slicing operations - a requirement for two critical components:
\begin{itemize}[topsep=0pt,parsep=0pt]
    \item efficient parallel computation of future values $\square_{t_{k+1}}$ (implemented via \texttt{torch.roll} operations in the code snippet of Figure~\ref{sigDelta_code_optimal}),
    \item anti-diagonal extraction from the tensor $[\bB_{t_m}^\top (\bA_{t_{k+1}} - \bA_{k})]_{m,k}$, part of the core mechanism enabling the complexity-breaking mechanism.
\end{itemize}

These limitations fundamentally constrain our ability to exploit the theoretical advantages of the signature-inspired approach, despite its mathematical promise. 
The challenge highlights an important practical consideration when bridging theoretical complexity analysis with hardware-aware implementations: framework-specific constraints can dominate algorithmic improvements at scale.

We provide both PyTorch and Triton kernel implementations in the \emph{supplementary material}.

These implementation challenges suggest concrete engineering pathways: careful memory management through CUDA-level kernel optimization could circumvent current Triton limitations. We hypothesize that direct control over GPU memory would unlock the theoretical $\mathcal{O}(L)$ advantage demonstrated in Table~\ref{tab:comparison}, leaving this empirical validation for future work.

\section{Conclusions and Future Directions}
\label{sec:future_work}

The \emph{Parallel Flows} framework we introduced in this paper establishes a principled foundation for analyzing matrix-valued SSMs through the lens of controlled differential equations. We made the following key contributions: (1) provided a simplified derivations of existing hardware-efficient algorithms, (2) generalized to the more expressive low-rank regime, while preserving parallelism across sequence length, and (3) discovered fundamentally new, parallel in time, computational approaches inspired by rough path theory leading to an improvement in computational cost by an entire order of magnitude in sequence length $L$.

Beyond our current results, this perspective opens three promising research directions:

\begin{itemize}
    \item \textbf{Scaling limit analysis}: Leveraging the CDE formalism to derive closed-form solutions for infinite-depth limits, following the methodology of \cite{cirone2023neural, cirone2024theoretical}. This would enable precise characterization of expressive power.
    \item \textbf{Adaptive step-size solvers} As mentioned before, while uniform partitions are adequate for basic implementations, adaptive step-size strategies that leverage the regularity properties of the driving path $\omega$ can optimize chunk granularity more effectively. 
    \item \textbf{Higher order solvers} Higher order CDE solvers, such as the log-ODE method provides a particularly promising approach, offering a solution that accommodates larger step sizes by utilizing higher-order iterated integrals of the driving path $\omega$ through its log-signature. This method has the potential to reduce computational complexity by a factor of $1/k$, where $k$ represents the number of steps captured by the log-signature locally, aligning with the fundamental principles of rough path theory.
    
\end{itemize}


\newpage

\bibliography{bibliography}
\bibliographystyle{abbrvnat}

\newpage
\appendix
\onecolumn

\section{Theoretical Proofs}
\label{app:proofs}

\subsection{Flow Equation for matrix-valued SSMs}

\begin{proposition}
    Given matrix valued paths $\bomega, \bxi: [0, 1] \to \R^{d \times d}$, the CDE defined by 
    \begin{equation}
        d\bS_t = \bS_t \cdot d\bomega_t + d\bxi_t
    \end{equation}
    can be solved on any interval $[s, t] \subseteq [0,1]$ as
    \begin{equation}
    \bS_t = \bS_s\mathbb{P}_{s,t} + \int_s^t d\bxi_r \mathbb{P}_{r,t}, \quad \mathbb{P}_{s,t} = Id + \int_s^t \mathbb{P}_{s,r} d\bomega_r \in \R^{d \times d}.
\end{equation}
\end{proposition}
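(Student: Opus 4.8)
The plan is to treat the claimed expression as a candidate solution, verify it directly against the CDE, and then appeal to uniqueness. The central object is the propagator $\bP_{s,t}$, which I would first establish is well-defined: the fixed-point equation $\bP_{s,t} = Id + \int_s^t \bP_{s,r} \, d\bomega_r$ is solved by Picard iteration, yielding the Neumann-type series $\bP_{s,t} = \sum_{n \ge 0} \int_{s < r_1 < \cdots < r_n < t} d\bomega_{r_1} \cdots d\bomega_{r_n}$ already recorded in the excerpt, whose convergence is controlled by the total variation (or $p$-variation) of $\bomega$ on $[s,t]$. From the defining integral equation I would then read off the two facts that drive the whole argument: the normalization $\bP_{t,t} = Id$, and, for each fixed lower index $r$, the forward differential $d_t \bP_{r,t} = \bP_{r,t} \, d\bomega_t$.

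Next I would differentiate the candidate $\bS_t = \bS_s \bP_{s,t} + \int_s^t d\bxi_r \, \bP_{r,t}$ in the upper limit $t$. The first term gives $\bS_s \bP_{s,t} \, d\bomega_t$ directly. The second term is the delicate one: since $t$ appears both in the upper limit and inside the integrand $\bP_{r,t}$, a Leibniz rule produces a boundary contribution $d\bxi_t \, \bP_{t,t} = d\bxi_t$ together with an interior contribution $\big(\int_s^t d\bxi_r \, \bP_{r,t}\big) d\bomega_t$. Summing the two and factoring the common right multiplication by $d\bomega_t$ collapses everything to $d\bS_t = \big(\bS_s \bP_{s,t} + \int_s^t d\bxi_r \, \bP_{r,t}\big) d\bomega_t + d\bxi_t = \bS_t \, d\bomega_t + d\bxi_t$, which is exactly the CDE; evaluating at $t = s$ gives $\bS_s \bP_{s,s} = \bS_s$, so the initial condition holds as well. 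Uniqueness of solutions to the linear CDE then forces the candidate to coincide with the genuine solution.

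The main obstacle is justifying the Leibniz differentiation of $\int_s^t d\bxi_r \, \bP_{r,t}$ in the genuinely integral (Young/rough) setting, rather than assuming smooth drivers: one must interchange differentiation in $t$ with the $d\bxi_r$-integral and control the regularity of $r \mapsto \bP_{r,t}$ uniformly in $r$. I would sidestep this by instead working at the level of the series expansion, substituting the Neumann series for $\bP_{r,t}$ into the candidate and verifying term by term that the resulting iterated-integral identity reproduces the CDE. This converts the analytic interchange into a combinatorial reindexing of iterated integrals that converges absolutely under the variation bounds established in the first step, after which the same uniqueness argument closes the proof.
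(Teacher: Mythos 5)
Your proposal is correct, but it reaches the statement by a genuinely different route than the paper. The paper does not verify-and-conclude: it \emph{derives} the formula by specializing the general Wronskian representation of solutions to linear CDEs from \cite{cirone2024theoretical} (Appendix E). Concretely, it writes the matrix-multiplication vector field in coordinates as $V_{i,j}(A) = A \cdot \be_i \otimes \be_j$ and observes that compositions telescope,
\[
V_{i_n,j_n}\circ\cdots\circ V_{i_1,j_1}(A) = A\cdot(\be_{i_1}\otimes\be_{j_n})\prod_{k=2}^{n}\delta_{j_{k-1},i_k},
\]
so the abstract operator series defining the Wronskian collapses to right multiplication by the matrix series $\bP_{s,t} = \sum_{n\ge 0}\int_{s<r_1<\cdots<r_n<t} d\bomega_{r_1}\cdots d\bomega_{r_n}$; the integral equation $\bP_{s,t} = Id + \int_s^t \bP_{s,r}\,d\bomega_r$ is then read off by peeling the $n=0$ term from that series. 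Your argument instead constructs $\bP$ by Picard iteration, differentiates the candidate in $t$ (boundary term $d\bxi_t\,\bP_{t,t} = d\bxi_t$ plus interior terms absorbing into right multiplication by $d\bomega_t$), and invokes uniqueness for linear CDEs. What your route buys is self-containedness: no appeal to the external Wronskian machinery, only to a standard uniqueness theorem (Gronwall in the bounded-variation/Young regime, which is the relevant one here since the drivers are ultimately piecewise-linear interpolations). What the paper's route buys is that the formula is derived rather than guessed, and uniqueness comes for free because the cited result already characterizes the solution; it also makes structurally transparent \emph{why} the flow acts by right multiplication, which is the fact the rest of the paper exploits. The gap you flag yourself, justifying the Leibniz interchange for $\int_s^t d\bxi_r\,\bP_{r,t}$, is real but minor: your fallback of term-by-term verification through the Neumann series amounts to the same reindexing of iterated integrals that the paper performs inside the Wronskian (and that it also carries out explicitly, via Fubini, in the proof of Proposition \ref{prop:continuous_AB_flow}), and absolute convergence follows from the factorial decay of iterated-integral norms that you already invoke.
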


\begin{proof}
Here $V \in \mathcal{L}( \R^{d \times d}; \mathcal{L}(\R^{d \times d}; \R^{d \times d}))$ is the linear vector field encoding matrix multiplication \emph{i.e.} $V(A) \cdot B := A \cdot B$).

Note how we can write the field action in coordinates as 
\[
V(A)\cdot B = A \cdot B = \sum_{i, j = 1}^d  (A \cdot \be_i \otimes \be_j ) ~ B_{i,j} = \sum_{i, j = 1}^d  V_{i,j}(A) ~ B_{i,j}.
\]

But we know (\cite{cirone2024theoretical} Appendix E) how to write the explicit solution to a CDE of this form:
\begin{equation}
    \bS_t = \mathbb{W}_{s,t} \bS_s + \int_s^t \mathbb{W}_{r,t} d\bxi_r
\end{equation}
where $\mathbb{W} := W^{V, \bomega}: \{(s,t) \in [0, T]\times[0,T] ~|~ s \leq t\} \to \mathcal{L}(\R^{d\times d}; \R^{d \times d})$ is the \emph{Wronskian} matrix defined by the series
\begin{align*}
     W^{V, \bomega}_{s,t}(\bullet) :&= \sum_{n=0}^{\infty} V^{\otimes n}(\bullet) Sig(\bomega)^{\otimes n}_{s,t}
     \\
     &= \sum_{(I,J)} V_{i_n, j_n} \circ \cdots \circ V_{i_1, j_1} (\bullet) Sig(\bomega)^{(I,J)}_{s,t}.
\end{align*}    

Note that since $V_{i,j}(A) = A \cdot \be_i \otimes \be_j$ then 
\begin{align*}
   V_{i_n, j_n} \circ \cdots \circ V_{i_1, j_1} (A) &= 
   A \cdot (\be_{i_1} \otimes \be_{j_1}) \cdots (\be_{i_n} \otimes \be_{j_n})\\
    &= A \cdot (\be_{i_1} \otimes \be_{j_n}) ~ \prod_{k=2}^{n} \delta_{j_{k-1}, i_k}
\end{align*}
hence we obtain
\begin{align*}
    V^{\otimes n}(A) Sig(\bomega)^{\otimes n}_{s,t} &= 
    \sum_{i,j =1}^d \sum_{\alpha_2, \dots, \alpha_n = 1}^d A \cdot (\be_{i_1} \otimes \be_{j}) ~ 
    \iiint\limits_{s < r_1 < \cdots < r_n < t} d\bomega^{i,\alpha_2}_{r_1} \cdots d\bomega^{\alpha_n,j}_{r_n} \\
    &= 
    \sum_{i,j =1}^d A \cdot (\be_{i_1} \otimes \be_{j}) ~ 
    \iiint\limits_{s < r_1 < \cdots < r_n < t} [ d\bomega_{r_1} \cdots d\bomega_{r_n}]_{i,j} \\
    &= 
    A \cdot 
    \iiint\limits_{s < \cdots < t} d\bomega_{r_1} \cdots d\bomega_{r_n} 
\end{align*}
from which
\begin{equation}
    \mathbb{W}_{s,t}(A) = A \cdot \left( \sum_{n=0}^{\infty} ~~ \iiint\limits_{s < \cdots < t} d\bomega_{r_1} \cdots d\bomega_{r_n} \right)
\end{equation}

Let 
\begin{equation*}
    \bP_{s,t} = \sum_{n=0}^{\infty} ~~ \iiint\limits_{s < \cdots < t} d\bomega_{r_1} \cdots d\bomega_{r_n}
\end{equation*}
then 
\begin{equation*}
    \bP_{s,t} = Id + \sum_{n=1}^{\infty} ~~ \iiint\limits_{s < \cdots < t} d\bomega_{r_1} \cdots d\bomega_{r_n} = Id + \int_s^t \mathbb{P}_{s,r} d\bomega_r
\end{equation*}
and as needed
\begin{equation*}
    \bS_t = \bS_s\mathbb{P}_{s,t} + \int_s^t d\bxi_r \mathbb{P}_{r,t}.
\end{equation*}

\end{proof}

\subsection{Parallel Flows}

\subsubsection{Low-Rank Delta Rule}
\label{app:parallelFlows_proofs}

\begin{proposition}
    For matrix-valued paths $\bA, \tilde \bA,  \bB: [0,1] \to \R^{d \times R}$ let the drivers be given by
\begin{equation}
    d \bomega_t =  \bA_t \bB^\top_t dt \in \mathbb{R}^{d \times d},
    \quad
    d \bxi_t =  \tilde\bA_t \bB^\top_t dt \in \mathbb{R}^{d \times d}
\end{equation}
then the flow can be written as 
\begin{equation}\label{eqn:HR_DeltaRule_flow_app}
    \bS_t 
    = \bS_s + \int_s^t \left( \bS_s \bW_{s,r} + \bU_{s,r} \right) \bB_r^\top dr
\end{equation}
where $\bW_{s,t}$ and $\bU_{s,t}$ are the solution of the following integral equations
\begin{equation}
    \bW_{s,t} =  \bA_t + \int_s^t \bW_{s,r} \bB^\top_r \bA_t ~dr,
    \quad
    \bU_{s,t} =
    \tilde\bA_t + \int_s^t \bU_{s, r} \bB_r^\top \bA_t dr.
\end{equation}
\end{proposition}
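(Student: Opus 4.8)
The plan is to verify the proposed decomposition directly and then appeal to uniqueness of the solution of the CDE~\eqref{eq:cde}, whose existence is guaranteed by Proposition~\ref{prop:flow_body}. I would treat the right-hand side of \eqref{eqn:HR_DeltaRule_flow_app} as a candidate process, defined explicitly through the integral equations for $\bW_{s,t}$ and $\bU_{s,t}$ (which do not involve $\bS$ and are therefore genuine definitions, not implicit constraints on the state), and show that it satisfies both the correct initial condition at $t=s$ and the differential equation obtained by inserting the low-rank drivers into \eqref{eq:cde}. Since $\mathbb{P}_{s,s}=Id$ forces $\bS_s$ as the value of the true solution at $t=s$, and the candidate equals $\bS_s$ there as well, matching the dynamics will suffice.

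First I would substitute $d\bomega_r = \bA_r\bB_r^\top\,dr$ and $d\bxi_r = \tilde\bA_r\bB_r^\top\,dr$ so that the target dynamics read $\tfrac{d}{dt}\bS_t = (\bS_t\bA_t + \tilde\bA_t)\bB_t^\top$. Differentiating the candidate gives $\tfrac{d}{dt}\bS_t = (\bS_s\bW_{s,t} + \bU_{s,t})\bB_t^\top$, so it is enough to establish the $\bB_t^\top$-stripped identity
\begin{equation*}
    \bS_s\bW_{s,t} + \bU_{s,t} = \bS_t\bA_t + \tilde\bA_t .
\end{equation*}
Into the right-hand side I would insert the candidate itself, expanding $\bS_t\bA_t = \bS_s\bA_t + \bS_s\int_s^t \bW_{s,r}\bB_r^\top\bA_t\,dr + \int_s^t \bU_{s,r}\bB_r^\top\bA_t\,dr$, and then group the terms that left-multiply $\bS_s$ separately from the remainder. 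The first group is precisely $\bS_s\big(\bA_t + \int_s^t \bW_{s,r}\bB_r^\top\bA_t\,dr\big) = \bS_s\bW_{s,t}$ and the second is $\tilde\bA_t + \int_s^t \bU_{s,r}\bB_r^\top\bA_t\,dr = \bU_{s,t}$, so the identity holds by the very definitions of $\bW$ and $\bU$.

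I expect the genuine care to lie not in any hard estimate but in the bookkeeping: the identity to be checked contains the candidate on both sides, so it is really a self-consistency statement that must be verified by substituting the candidate's own integral form, and the grouping into $\bS_s$-dependent and $\bS_s$-free parts must reproduce the two integral equations exactly. The final step is the formal invocation of uniqueness for the linear CDE. As an alternative, more constructive route I could instead start from the explicit propagator form $\bS_t = \bS_s\mathbb{P}_{s,t} + \int_s^t \tilde\bA_r\bB_r^\top\mathbb{P}_{r,t}\,dr$ of Proposition~\ref{prop:flow_body} and, using the ansatz $\bW_{s,t} := \mathbb{P}_{s,t}\bA_t$, show $\mathbb{P}_{s,t} = Id + \int_s^t \bW_{s,r}\bB_r^\top\,dr$; here the main obstacle becomes differentiating $\mathbb{P}_{r,t}$ in its upper argument via the Leibniz rule (using $\partial_t\mathbb{P}_{r,t} = \mathbb{P}_{r,t}\bA_t\bB_t^\top$) to extract the matching equation for $\bU$, after which \eqref{eqn:HR_DeltaRule_flow_app} follows.
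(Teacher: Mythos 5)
Your main argument is correct, but it takes a genuinely different route from the paper. The paper's proof is \emph{constructive}: it starts from the propagator representation $\bS_t = \bS_s\mathbb{P}_{s,t} + \int_s^t d\bxi_r\, \mathbb{P}_{r,t}$ of Proposition~\ref{prop:flow_body}, sets $\bW_{s,t} := \mathbb{P}_{s,t}\bA_t$ to get the $\bW$ equation immediately, and then handles the inhomogeneous term not by differentiation but by expanding $\mathbb{P}_{r,t}$ once and performing a Fubini-type interchange of the double integral (this is exactly where the shared factor $\bB$ is used), which exhibits $\bU_{s,t} := \tilde\bA_t + \int_s^t d\bxi_r\,\mathbb{P}_{r,t}\bA_t$ as a solution of the stated equation. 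Your proof instead runs verification-plus-uniqueness: take $\bW,\bU$ as given by their Volterra equations, check that the candidate process satisfies $\tfrac{d}{dt}\bS_t = (\bS_t\bA_t + \tilde\bA_t)\bB_t^\top$ with the right initial condition via the self-consistency identity $\bS_s\bW_{s,t}+\bU_{s,t} = \bS_t\bA_t + \tilde\bA_t$, and conclude. Your algebra there is right, and it exploits the same structural point (the common right factor $\bB_r^\top$ lets right-multiplication by $\bA_t$ regroup into the two defining equations). What your route buys is brevity and independence from the flow machinery; what it costs is two standard but non-optional justifications that you should state rather than wave at: (i) well-posedness of the Volterra equations for $\bW$ and $\bU$ --- an integral equation is not automatically a ``genuine definition''; existence/uniqueness needs a Picard/Neumann-series argument (or the observation that $\mathbb{P}_{s,t}\bA_t$ solves it), and uniqueness is also needed for the proposition's phrase ``the solution'' to be meaningful; and (ii) uniqueness for the linear CDE itself (Gr\"onwall, fine here since the drivers are absolutely continuous in $t$). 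The paper's construction sidesteps both because it only transforms an already-established solution formula. Your sketched ``alternative constructive route'' is essentially the paper's proof, except that the paper avoids differentiating $\mathbb{P}_{r,t}$ in its upper argument by the integral-interchange argument instead.
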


\begin{proof}
    It suffices to show that
    \[
    \bS_s\bP_{s,t} = \bS_s + \int_s^t \bS_s \bW_{s,r} \bB^\top_r ~ dr, 
    \quad
    \int_s^t d\bxi_r \bP_{s,r} = \int_s^t \bU_{s,r}\bB^\top_r ~ dr.
    \]

    To see this note how
    \begin{align}
    \mathbb{P}_{s,t} &= Id + \int_s^t \mathbb{P}_{s,r} d\bomega_r
    = Id + \int_s^t \mathbb{P}_{s,r} \bA_r \bB^\top_r dr
    \end{align}
    and defining $\bW_{s,t} := \mathbb{P}_{s,t} \bA_t$ we obtain 
    \begin{align}
        \bW_{s,t} = \mathbb{P}_{s,t} \bA_t = \bA_t + \int_s^t \mathbb{P}_{s,r} \bA_r \bB^\top_r \bA_t ~ dr =  \bA_t + \int_s^t \bW_{s,r} \bB^\top_r \bA_t ~dr
    \end{align}

    Regarding the term $ \int_s^t d\bxi_r \mathbb{P}_{r,t}$ note
    \begin{equation}
         \int_s^t d\bxi_r \mathbb{P}_{r,t} = 
         \int_s^t  \tilde\bA_r \bB^\top_r \mathbb{P}_{r,t} dr =
         \int_s^t  \tilde\bA_r \bB^\top_r dr + \int_{r=s}^t \int_{u=r}^t \tilde\bA_r \bB^\top_r \mathbb{P}_{r,u} \bA_u \bB^\top_u du dr
    \end{equation}
    
    Now exchanging the order of the integrals we obtain
    \begin{align*}
        \int_{r=s}^t \int_{u=r}^t \tilde\bA_r \bB^\top_r \mathbb{P}_{r,u} \bA_u \bB^\top_u du
        &= 
        \int_{u=s}^t \int_{r=s}^u \tilde\bA_r \bB^\top_r \mathbb{P}_{r,u} \bA_u \bB^\top_u dr du
        \\&= 
        \int_{r=s}^t \int_{u=s}^r \tilde\bA_u \bB^\top_u \mathbb{P}_{u,r} \bA_r \bB^\top_r du dr
    \end{align*}
    so that, and here crucially the fact that $\bB$ is the same in both $\bxi$ and $\bomega$, one has
    \begin{equation}
        \int_s^t d\bxi_r \mathbb{P}_{r,t} = 
        \int_s^t \left(
        \tilde\bA_r + \int_{u=s}^r \tilde\bA_u \bB^\top_u \mathbb{P}_{u,r} \bA_r du
        \right) \bB^\top_r dr.
    \end{equation}
    
    Let $\bU_{s,t} := \tilde\bA_t + \int_{r=s}^t \tilde\bA_r \bB^\top_r \mathbb{P}_{r,t} \bA_t dr$ then
    \begin{align}
        \bU_{s,t} &= \tilde\bA_t + \int_{r=s}^t \tilde\bA_r \bB^\top_r \mathbb{P}_{r,t} \bA_t dr
        = \tilde\bA_t + \int_s^t d\bxi_r \mathbb{P}_{r,t}\bA_t
        =
        \tilde\bA_t + \int_s^t \bU_{s, r} \bB_r^\top \bA_t dr.
    \end{align}
    and just as wanted
    \[
    \int_s^t d\bxi_r \bP_{s,r} = \int_s^t \bU_{s,r}\bB^\top_r ~ dr.
    \]
    
\end{proof}

\paragraph{Discretization and Intra-Chunk computation}

\begin{theorem}
    Define, with abuse of notation, the 3-tensors $\bA, \tilde\bA, \bB, \bW, \bU \in \R^{(L \times R) \times d}$ with entries $[ \square ]_{k, i}^{m} = [ \square_{t_{k-1}} ]_{m}^i$ then
    \begin{equation}
        \bS_1 = \bS_{0} + (\bU + \bW \bS_0^\top)^\top \bB
    \end{equation}
    \begin{equation}
        \bW = \bA + (\bM \odot \bA\bB^\top) \bW, \quad \bU = \tilde\bA + (\mathbf{M} \odot \bA \bB^\top) \bU
    \end{equation}
    Where $\bM \in \R^{(L \times R) \times (L \times R)}$ has entries $[\bM]_{t, i}^{s, j} = \mathbb{I}(s < t)$ and composition of tensors corresponds to contraction of corresponding co-variant and contra-variant indices. 
    In particular the \emph{implicit} equations defining $\bW$ and $\bU$ can be \emph{explicitly} solved as 
    \begin{equation}
        \bW = (\mathbf{Id} - \bM \odot \bA\bB^\top)^{-1} \bA, \quad 
        \bU = (\mathbf{Id} - \bM \odot \bA\bB^\top)^{-1} \tilde\bA
    \end{equation}
    where $[\mathbf{Id}]_{s, i}^{t, j} = \delta_s^t \delta_i^j$ and $(\mathbf{Id} - \bM \odot \bA\bB^\top)^{-1}$ is the inverse of $(\mathbf{Id} - \bM \odot \bA\bB^\top) \in \R^{(L \times R) \times (L \times R)}$.
\end{theorem}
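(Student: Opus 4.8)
The plan is to prove the theorem entirely at the level of scalar components: I would unfold each claimed tensor identity through the index dictionary $[\square]_{k,i}^m = [\square_{t_{k-1}}]_m^i$ and match it term-by-term against the discretized flow equations for $\bW_{t_k}$, $\bU_{t_k}$ and $\bS_1$ derived just above. No new analytic input is required beyond those discrete equations; the content is careful bookkeeping of the multi-index $(k,i)$ and the contraction index $m$, plus one elementary triangularity argument for the inversion.

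First I would verify the implicit tensor equation for $\bW$. Contracting the upper $(k',i')$ index of $\bM \odot \bA\bB^\top$ against the lower index of $\bW$ gives
\[
[(\bM \odot \bA\bB^\top)\bW]_{k,i}^m = \sum_{k'=1}^L \sum_{i'=1}^R \mathbb{I}(k'<k)\Big(\sum_{q=1}^d [\bA]_{k,i}^q [\bB]_{k',i'}^q\Big)[\bW]_{k',i'}^m.
\]
Substituting $[\bA]_{k,i}^q=[\bA_{t_{k-1}}]_q^i$, $[\bB]_{k',i'}^q=[\bB_{t_{k'-1}}]_q^{i'}$ and $[\bW]_{k',i'}^m=[\bW_{t_{k'-1}}]_m^{i'}$, the mask $\mathbb{I}(k'<k)$ restricts $k'-1$ to range over $0,\dots,k-2$, so with $m'=k'-1$ the double sum becomes exactly the $(m,i)$ component of $\sum_{m'=0}^{k-2}\bW_{t_{m'}}\bB^\top_{t_{m'}}\bA_{t_{k-1}}$; adding the $[\bA]_{k,i}^m=[\bA_{t_{k-1}}]_m^i$ term recovers $\bW_{t_{k-1}}$, proving $\bW=\bA+(\bM\odot\bA\bB^\top)\bW$. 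Since the contracted factors $\bB$ and $\bA$ are untouched, replacing the leading $\bA$ by $\tilde\bA$ runs the identical argument for $\bU$.

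Next I would expand the readout $(\bU+\bW\bS_0^\top)^\top\bB$. The key observation is that contracting $\bW$ against $\bS_0^\top$ over the $d$-index yields $[\bW\bS_0^\top]_{k,i}^m=\sum_p[\bW_{t_{k-1}}]_p^i[\bS_0]_m^p=[\bS_0\bW_{t_{k-1}}]_m^i$, so that $\bU+\bW\bS_0^\top$ carries the matrix $\bU_{t_{k-1}}+\bS_0\bW_{t_{k-1}}$ in slot $k$. The transpose swaps the $(k,i)$ and $m$ index groups, and the final contraction against $\bB$ over $(k,i)$ collapses the column index $i$ into $\bB^\top_{t_{k-1}}$ and sums over $k$; after the shift $k\mapsto k-1$ this reproduces $\sum_{k=0}^{L-1}(\bS_0\bW_{t_k}+\bU_{t_k})\bB^\top_{t_k}$, establishing $\bS_1=\bS_0+(\bU+\bW\bS_0^\top)^\top\bB$.

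Finally, for the explicit inversion I would rewrite the implicit equation as $(\mathbf{Id}-\bM\odot\bA\bB^\top)\bW=\bA$ and show the operator is invertible. Reading $\bM\odot\bA\bB^\top$ as an $L\times L$ array of $R\times R$ blocks indexed by $(k,k')$, the mask $\mathbb{I}(k'<k)$ annihilates every block with $k'\ge k$, so it is strictly block-lower-triangular and hence nilpotent with $(\bM\odot\bA\bB^\top)^L=0$. Thus $\mathbf{Id}-\bM\odot\bA\bB^\top$ is block-lower-triangular with identity diagonal blocks, so it is invertible with inverse equal to the finite Neumann series $\sum_{n=0}^{L-1}(\bM\odot\bA\bB^\top)^n$, giving $\bW=(\mathbf{Id}-\bM\odot\bA\bB^\top)^{-1}\bA$ and likewise for $\bU$. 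The main obstacle I anticipate is not conceptual but notational: simultaneously keeping straight the transpose of the $(m,i)$ roles baked into the dictionary, the shift $k\mapsto k-1$ between tensor slot $k$ and time $t_{k-1}$, and the correspondence between the mask $\mathbb{I}(k'<k)$ and the summation range $\sum_{m=0}^{k-1}$. Once these conventions are fixed consistently the three identities drop out mechanically, and invertibility is immediate from the triangular structure.
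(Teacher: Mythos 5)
Your proof is correct, and for the two tensor identities (the implicit equation $\bW = \bA + (\bM\odot\bA\bB^\top)\bW$, its analogue for $\bU$, and the readout formula for $\bS_1$) it follows essentially the same route as the paper: unfold the index dictionary $[\square]_{k,i}^m = [\square_{t_{k-1}}]_m^i$, let the mask $\mathbb{I}(k'<k)$ encode the summation range $\sum_{m'=0}^{k-2}$, and match terms against the discretized flow equations; the paper performs this bookkeeping at the level of slices $[\square]_{k,\cdot}^{\cdot}$ (with its axis-swap remark) rather than scalar entries, but that difference is cosmetic. Where you genuinely diverge is the invertibility step. The paper invokes its Triangular Tensor Inversion proposition, which exhibits the inverse explicitly by block forward substitution and proves it by transporting the problem through the isomorphism $\phi:\R^{(L\times R)\times(L\times R)}\to\R^{LR\times LR}$ to the classical block-triangular inversion formula; you instead observe that $\bM\odot\bA\bB^\top$ is strictly block-lower-triangular, hence nilpotent with $(\bM\odot\bA\bB^\top)^L=0$, so $\mathbf{Id}-\bM\odot\bA\bB^\top$ has the finite Neumann series $\sum_{n=0}^{L-1}(\bM\odot\bA\bB^\top)^n$ as a two-sided inverse. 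Your argument is more elementary and self-contained (no external citation needed), and it fully suffices for the theorem as stated. What the paper's route buys in exchange is constructive content: the forward-substitution formula is exactly the $\mathcal{O}(L^3R^3)$ sequential / $\mathcal{O}(L^2R)$ parallel procedure implemented in Figure~\ref{code_tensorInv} and quoted in Table~\ref{tab:comparison}, so the proposition does double duty as existence proof and as algorithm, whereas naively evaluating your Neumann series would not achieve those complexity bounds.
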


\begin{remark}
    The notation employs an axis swap, defined by the relation $[\square]_{k, \cdot}^\cdot = \square_{t_{k-1}}^\top$ for $\bA, \tilde\bA, \bB, \bW, \bU$.
    This choice aligns mathematical conventions, where elements of $\R^d$ are naturally treated as column vectors, with practical implementations, where data is often stored and manipulated as row vectors.
\end{remark}

\begin{proof}
    This is just a matter of checking that entries correspond. Writing for 3-tensors $[\square^\top]_m^{k,i} := [\square]_{k,i}^m $ we have 
    \begin{align*}
        \bS_{1} &= \bS_{0} 
    + \sum_{k = 0}^{L-1} 
    \left( \bS_{0}  \bW_{t_k} + \bU_{t_k} \right) \bB^\top_{t_k}
    = \bS_{0} 
    + \sum_{k = 1}^{L} 
    \left( \bS_{0}  \bW_{t_{k-1}} + \bU_{t_{k-1}} \right) \bB^\top_{t_{k-1}}
    \\
    &= \bS_{0} 
    + \sum_{k = 1}^{L} 
    \left( \bS_{0}  [\bW^\top]^{k, \cdot}_\cdot + [\bU^\top]^{k, \cdot}_\cdot \right)[\bB]_{k, \cdot}^\cdot
    = \bS_{0} 
    + 
    \left( \bS_{0}\bW^\top + \bU^\top \right)\bB
    \end{align*}
    as needed. 
    Similarly, from 
    \begin{align*}
    \bW_{t_{k-1}} &= \bA_{t_{k-1}} + \sum_{m = 0}^{k-2} \bW_{t_m} \bB^\top_{t_{m}} \bA_{t_{k-1}}
    = \bA_{t_{k-1}} + \sum_{m = 1}^{k-1} \bW_{t_{m-1}} \bB^\top_{t_{m-1}} \bA_{t_{k-1}}
    \end{align*}
    we get the 3-tensor equation
    \begin{align*}
        [\bW]_{k, \cdot}^\cdot &= [\bA]_{k, \cdot}^\cdot + \sum_{m=1}^{k-1} [\bA]_{k, \cdot}^\cdot [\bB^\top]^{m, \cdot}_\cdot [\bW]_{m, \cdot}^\cdot
        \\
        &= [\bA]_{k, \cdot}^\cdot + \sum_{m=1}^{L} \mathbb{I}(m < k) ~ [\bA]_{k, \cdot}^\cdot [\bB^\top]^{m, \cdot}_\cdot [\bW]_{m, \cdot}^\cdot
        \\
        &= [\bA]_{k, \cdot}^\cdot + \sum_{m=1}^{L} ( [\bM]_{k, \cdot}^{m, \cdot} \odot [\bA]_{k, \cdot}^\cdot [\bB^\top]^{m, \cdot}_\cdot)  [\bW]_{m, \cdot}^\cdot
    \end{align*}
    so that 
    \begin{equation}
        \bW = \bA + (\bM \odot \bA\bB^\top) \bW.
    \end{equation}
    The equation for $\bU$ follows analogously.

    From 
    \begin{equation}
        \bW = \bA + (\bM \odot \bA\bB^\top) \bW, \quad \bU = \tilde\bA + (\mathbf{M} \odot \bA \bB^\top) \bU
    \end{equation}
    immediately follows that 
    \begin{equation}
        (\mathbf{Id} - \bM \odot \bA\bB^\top)\bW = \bA , \quad 
        (\mathbf{Id} - \bM \odot \bA\bB^\top) \bU = \tilde\bA
    \end{equation}
    so that it remains to prove the existence of a left-inverse computable with {$\mathcal{O}(L^3R^3)$} complexity.
    This follows from {Proposition \ref{prop:triangular_inverse}} given that $[\mathbf{Id}]_{t, \cdot}^{s, \cdot} = \delta_s^t ~ Id$ and $[\bM]_{t, \cdot}^{s, \cdot} = \mathbb{I}(s < t)$ thus, 
    \[
    s > t \implies [ \mathbf{Id} - \bM \odot \bA\bB^\top ]_{t, \cdot}^{s, \cdot} = [ \mathbf{Id} ]_{t, \cdot}^{s, \cdot} - [\bM ]_{t, \cdot}^{s, \cdot} \odot [\bA\bB^\top ]_{t, \cdot}^{s, \cdot} = \mathbf{0}
    \]
    \[
    s = t \implies [ \mathbf{Id} - \bM \odot \bA\bB^\top ]_{t, \cdot}^{s, \cdot} = [ \mathbf{Id} ]_{t, \cdot}^{s, \cdot} - [\bM ]_{t, \cdot}^{s, \cdot} \odot [\bA\bB^\top ]_{t, \cdot}^{s, \cdot} = Id \in GL(R).
    \]
\end{proof}

The following proposition is a reformulation of the classical inversion result for block-triangular matrices:
\begin{proposition}[Triangular Tensor Inversion]
\label{prop:triangular_inverse}
    Let $\bC \in \R^{(T \times R) \times (T \times R)}$ be such that 
    \begin{equation}
        s > t \implies [ \bC ]_{t, i}^{s, j} = 0, \quad 
        \forall t. ~ [ \bC ]_{t, \cdot}^{t, \cdot} \in GL(R).
    \end{equation}
    Then the tensor $\mathbf{D} \in \R^{(T \times R) \times (T \times R)}$ with entries
    \begin{equation}
        s > t \implies \mathbf{D}_{t, \cdot}^{s, \cdot} = \mathbf{0} \in \R^{R \times R}
    \end{equation}
    \begin{equation}
        \mathbf{D}_{t, \cdot}^{t, \cdot} = ( \mathbf{C}_{t, \cdot}^{t, \cdot} )^{-1} \in \R^{R \times R}
    \end{equation}
    \begin{equation}
        s < t \implies \mathbf{D}_{t, \cdot}^{s, \cdot} = - 
        ( \mathbf{C}_{t, \cdot}^{t, \cdot} )^{-1} 
        \left( 
        \sum_{r=s}^{t-1} 
        \mathbf{C}_{t, \cdot}^{r, \cdot}
        \mathbf{D}_{r, \cdot}^{s, \cdot} 
        \right) = - 
        \bD_{t, \cdot}^{t, \cdot}
        \left( 
        \sum_{r=s}^{t-1} 
        \mathbf{C}_{t, \cdot}^{r, \cdot}
        \mathbf{D}_{r, \cdot}^{s, \cdot} 
        \right)
        \in \R^{R \times R}
    \end{equation}
    is inverse for composition \emph{i.e.} $\bD \bC = \mathbf{Id}$.
\end{proposition}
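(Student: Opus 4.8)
The plan is to strip away the tensor language and recognize the statement as the classical inversion of a block-lower-triangular matrix by forward substitution. I would identify each $4$-tensor $\bC$ (and $\bD$) with the $T\times T$ array of $R\times R$ blocks $\bC_{t,\cdot}^{s,\cdot}$, the pair $(t,s)$ playing the role of (block-row, block-column). Under the contraction rule of the preceding theorem, the composition $\bC\bD$ is exactly block-matrix multiplication, $[\bC\bD]_{t,\cdot}^{s,\cdot} = \sum_r \bC_{t,\cdot}^{r,\cdot}\,\bD_{r,\cdot}^{s,\cdot}$, each summand being an ordinary product of $R\times R$ matrices. In this dictionary the hypotheses read: $\bC$ is block-lower-triangular ($\bC_{t,\cdot}^{s,\cdot}=0$ for $s>t$) with invertible diagonal blocks $\bC_{t,\cdot}^{t,\cdot}\in GL(R)$, and the three defining formulas for $\bD$ are precisely the forward-substitution scheme, solved one block-column at a time.

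First I would record the supports of the two factors: $\bC_{r,\cdot}^{s,\cdot}=0$ unless $s\le r$, while by construction $\bD_{r,\cdot}^{s,\cdot}=0$ unless $s\le r$ and $\bD_{t,\cdot}^{t,\cdot}=(\bC_{t,\cdot}^{t,\cdot})^{-1}$. Hence in $[\bC\bD]_{t,\cdot}^{s,\cdot}=\sum_r \bC_{t,\cdot}^{r,\cdot}\bD_{r,\cdot}^{s,\cdot}$ only indices $r$ with $s\le r\le t$ survive, so the composite already vanishes whenever $s>t$. On the diagonal $s=t$ only $r=t$ contributes, giving $\bC_{t,\cdot}^{t,\cdot}\bD_{t,\cdot}^{t,\cdot}=Id$. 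For $s<t$ I would split off the top term $r=t$ to obtain $\bC_{t,\cdot}^{t,\cdot}\bD_{t,\cdot}^{s,\cdot}+\sum_{r=s}^{t-1}\bC_{t,\cdot}^{r,\cdot}\bD_{r,\cdot}^{s,\cdot}$, and then substitute the defining relation $\bC_{t,\cdot}^{t,\cdot}\bD_{t,\cdot}^{s,\cdot}=-\sum_{r=s}^{t-1}\bC_{t,\cdot}^{r,\cdot}\bD_{r,\cdot}^{s,\cdot}$, which is engineered so that the first term cancels the sum exactly and the whole block is $0$. This establishes $\bC\bD=\mathbf{Id}$ by a direct telescoping cancellation, with no induction needed for the verification itself (well-definedness of $\bD$ is automatic, since the $s<t$ formula references only blocks $\bD_{r,\cdot}^{s,\cdot}$ with $r<t$).

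Finally I would reconcile the direction of composition. The recursion is tailored to the product $\bC\bD$, so what it yields directly is $\bC\bD=\mathbf{Id}$, whereas the proposition asserts $\bD\bC=\mathbf{Id}$. Since all objects live in the finite-dimensional matrix algebra of endomorphisms of $\R^{T\times R}$, a right inverse of $\bC$ is automatically two-sided; equivalently, $\bD$ is itself block-lower-triangular with invertible diagonal blocks $(\bC_{t,\cdot}^{t,\cdot})^{-1}$ and hence invertible, so $\bC\bD=\mathbf{Id}$ forces $\bD\bC=\mathbf{Id}$. I expect this last reconciliation, together with keeping the co-variant/contra-variant bookkeeping straight when translating tensor contraction into block multiplication and reading off the triangular structure, to be the only genuinely delicate point; the algebra is the standard forward-substitution cancellation and requires no cleverness.
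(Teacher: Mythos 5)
Your proof is correct, and it is worth noting where it goes beyond the paper. The paper's own proof is a two-line reduction: it defines the flattening isomorphism $\phi: \R^{(T\times R)\times(T\times R)} \to \R^{TR\times TR}$, observes that $\phi$ turns tensor composition into matrix multiplication and that $\phi(\bC)$ is block-lower-triangular with invertible diagonal blocks, and then simply cites the classical block-triangular inversion formula (Meyer) for the conclusion. You perform the same identification but then carry out the forward-substitution verification explicitly, which makes the argument self-contained. More importantly, you flag a genuine subtlety that the paper's citation glosses over: the recursion defining $\bD$ is precisely the statement $\sum_{r=s}^{t}\bC_{t,\cdot}^{r,\cdot}\bD_{r,\cdot}^{s,\cdot}=0$, so the term-by-term cancellation directly yields the \emph{right} inverse $\bC\bD=\mathbf{Id}$, whereas the proposition asserts (and the paper's application to solving $(\mathbf{Id}-\bM\odot\bA\bB^\top)\bW=\bA$ actually requires) the \emph{left} inverse $\bD\bC=\mathbf{Id}$; your reconciliation via finite-dimensionality (equivalently, via the invertibility of $\bD$, which is itself block-lower-triangular with invertible diagonal blocks) closes this gap cleanly. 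In short, your route buys a citation-free proof and an explicit treatment of the one-sided-to-two-sided inverse step, at the cost of a page of algebra the paper dispenses with by reference.
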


\begin{proof}
    Under the identification $\phi: \R^{(T \times R) \times (T \times R)} \to \R^{TR \times TR}$
    given by $[\phi(\bA)]_{i + (k-1)R}^{j + (m-1)R} = [\bA]_{k,i}^{m,j}$ we have 
    \[
    \phi(\bA) \phi(\bB) = \phi(\bA\bB), 
    \quad 
    \phi(\mathbf{Id}) = Id
    \]
    \emph{i.e.} $\phi$ is an isomorphism for tensor composition. Since $\phi(\bC)$ is a block triangular matrix the result follows from the classical inversion formula \cite{Meyer70}.
\end{proof}

\subsection{Signature-Inspired algorithm for DeltaNet}
\label{app:sig_algo_proofs}

Signature methods are a modern class of algorithms inspired by rough path theory which recently became popular in machine learning applications dealing with sequential data \cite{fermanian2023new, cass2024lecture}. These  methods have seen a rapid rise in popularity in recent years have been applied in a variety of contexts including deep learning 
\cite{kidger2019deep, morrill2021neural, cirone2023neural, hoglund2023neural, cirone2024theoretical, issa2024non, barancikova2024sigdiffusions}, kernel methods \cite{salvi2021signature, salvi2021rough, lemercier2021distribution, lemercier2021siggpde, manten2024signature}, quantitative finance \cite{arribas2020sigsdes, salvi2021higher, horvath2023optimal, pannier2024path, cirone2025rough}, information theory \cite{salvi2023structure, shmelev2024sparse}, cybersecurity \cite{cochrane2021sk}, and computational neuroscience \cite{holberg2024exact} among others. In particular, the following algorithm is inspired from signature kernel techniques presented in \cite{salvi2021signature}, where the authors were able to solve a two dimensional partial differential equation (PDE) by refactoring the operations of the corresponding finite difference solver in a diagonal-wise fashion, thus obtaining a parallelisable scheme.

\begin{theorem}
    The solution to the implicit equation 
    \begin{equation}
        \bW_{t_{k}} =  \bA_{t_{k}} + \sum_{m = 0}^{k-1} \bW_{t_m} \bB^\top_{t_{m}} \bA_{t_k},
    \end{equation}
    is given by the diagonal elements $\bW(t_k, t_k)$ of the system
    \begin{equation}
        \bW(t_0, t_k) = \bA_{t_k}, \quad \bW(t_{k+1},t_{k+1}) = \bW(t_k,t_{k+1}) + \bW(t_k,t_k) \bB_{t_k}^\top \bA_{t_{k+1}}
    \end{equation}
    \begin{equation}
        m < k \implies \bW(t_{m+1}, t_{k+1}) =  \bW(t_{m},t_{k+1}) + \bW(t_{m+1},t_{k}) - \bW(t_{m},t_{k}) + \bW(t_{m},t_{m}) 
        \bB^\top_{t_{m}} ( \bA_{t_{k+1}} - \bA_{t_k}).
    \end{equation}
\end{theorem}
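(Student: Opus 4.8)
The plan is to exhibit an explicit closed form for the two-variable quantity $\bW(t_m, t_k)$, verify that it satisfies the three relations defining the system, and then read off that its diagonal solves the implicit equation. The right ansatz is the \emph{truncated} version of the implicit sum,
\begin{equation*}
    \bW(t_m, t_k) = \bA_{t_k} + \sum_{j=0}^{m-1} \bW_{t_j} \bB^\top_{t_j} \bA_{t_k}, \qquad 0 \le m \le k,
\end{equation*}
where $\bW_{t_j}$ denotes the unique solution of the implicit equation; uniqueness holds because that equation expresses $\bW_{t_k}$ purely in terms of $\bW_{t_m}$ with $m < k$ and is therefore solvable by forward substitution. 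Setting $m = k$ recovers exactly the implicit equation, so $\bW(t_k, t_k) = \bW_{t_k}$. Hence, once the ansatz is shown to obey the recurrences of the system, the diagonal statement is immediate.

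First I would settle well-posedness and the two easy cases. The update for $\bW(t_{m+1}, t_{k+1})$ references only entries whose index sum is strictly smaller than $m + k + 2$ (note in particular that the diagonal term $\bW(t_m, t_m)$ has index sum $2m \le m + k$), so induction on the antidiagonal index $m + k$ shows the recurrences determine every admissible grid value uniquely. The initial condition $\bW(t_0, t_k) = \bA_{t_k}$ is read off from the ansatz as an empty sum, and the diagonal recurrence follows by splitting off the $j = k$ term from the sum defining $\bW(t_{k+1}, t_{k+1})$ and invoking $\bW(t_k, t_k) = \bW_{t_k}$.

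The core of the argument is the off-diagonal recurrence. Here I would substitute the ansatz into each of the four right-hand terms $\bW(t_m, t_{k+1})$, $\bW(t_{m+1}, t_k)$, $-\bW(t_m, t_k)$, and $\bW(t_m, t_m)\bB^\top_{t_m}(\bA_{t_{k+1}} - \bA_{t_k})$. The constant pieces $\pm\bA_{t_k}$ cancel; the partial sums acting on $\bA_{t_k}$ differ by exactly the single term $\bW_{t_m}\bB^\top_{t_m}\bA_{t_k}$, which is precisely annihilated by the $-\bW_{t_m}\bB^\top_{t_m}\bA_{t_k}$ contribution of the last term. What survives is $\bA_{t_{k+1}} + \sum_{j=0}^{m}\bW_{t_j}\bB^\top_{t_j}\bA_{t_{k+1}}$, which is exactly $\bW(t_{m+1}, t_{k+1})$, closing the verification.

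Since the ansatz satisfies the initial condition and both recurrences, and the system has a unique solution by the antidiagonal induction, the system's solution coincides with the ansatz throughout the grid; in particular its diagonal equals $\bW(t_k, t_k) = \bW_{t_k}$, which is the claim. I expect the off-diagonal telescoping to be the only delicate step: the main obstacle is bookkeeping the four index-shifted partial sums so that the $\bA_{t_k}$ and $\bA_{t_{k+1}}$ cross-terms recombine correctly. Everything else is routine, and the conceptual work lies entirely in guessing the truncated-sum form of $\bW(t_m, t_k)$.
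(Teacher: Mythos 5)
Your proof is correct and takes essentially the same approach as the paper: both rest on the strengthened claim that every grid entry satisfies the truncated closed form $\bW(t_m, t_k) = \bA_{t_k} + \sum_{j=0}^{m-1}\bW_{t_j}\bB^\top_{t_j}\bA_{t_k}$, established through the identical four-term telescoping cancellation in the off-diagonal recurrence. The only difference is logical packaging — you posit the closed form as an ansatz and appeal to uniqueness of the system's solution (your antidiagonal induction), whereas the paper derives the same formula by a double induction directly on the system's entries, with the diagonal terms $\bW(t_m,t_m)$ in place of $\bW_{t_m}$ — a purely cosmetic distinction.
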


\begin{proof}
    We want to show how 
    \begin{equation}
        \bW(t_{k}, t_{k}) =  \bA_{t_{k}} + \sum_{m = 0}^{k-1} \bW(t_m, t_m) \bB^\top_{t_{m}} \bA_{t_k},
    \end{equation}
    to do so we will prove the stronger statement for $n \leq k$
    \begin{equation}
        \bW(t_{n}, t_{k}) =  \bA_{t_{k}} + \sum_{m = 0}^{n-1} \bW(t_m, t_m) \bB^\top_{t_{m}} \bA_{t_k}.
    \end{equation}

    Let's reason by induction on $k$. The base case $k=0$ is trivially true.
    Assume then the equality holds until $k > 0$ included. 
    Reasoning by induction on $n  \leq k+1$ we have that the base case $n = 0$ once again is trivial, moreover for $n < k$
    \begin{align*}
        \bW(t_{n+1}, t_{k+1})
        =&
        ~  \bW(t_{n},t_{k+1}) + \bW(t_{n+1},t_{k}) - \bW(t_{n},t_{k}) + \bW(t_{n},t_{n}) 
        \bB^\top_{t_{n}} ( \bA_{t_{k+1}} - \bA_{t_k})
        \\
        =& 
        ~ \bA_{t_{k+1}} + 
        \sum_{m = 0}^{n-1} \bW(t_m, t_m) \bB^\top_{t_{m}} \bA_{t_{k+1}} 
        + \bA_{t_{k}} + \sum_{m = 0}^{n} \bW(t_m, t_m) \bB^\top_{t_{m}} \bA_{t_{k}}
        \\
        & - \bA_{t_{k}} - \sum_{m = 0}^{n-1} \bW(t_m, t_m) \bB^\top_{t_{m}} \bA_{t_{k}}
        + \bW(t_{n},t_{n}) 
        \bB^\top_{t_{n}} ( \bA_{t_{k+1}} - \bA_{t_k})
        \\
        =& 
        ~ \bA_{t_{k+1}} + 
        \sum_{m = 0}^{n-1} \bW(t_m, t_m) \bB^\top_{t_{m}} \bA_{t_{k+1}} 
        + \bW(t_n, t_n) \bB^\top_{t_{n}} \bA_{t_{k}}
        \\
        & 
        + \bW(t_{n},t_{n}) 
        \bB^\top_{t_{n}} ( \bA_{t_{k+1}} - \bA_{t_k})
        \\
        =& 
        ~ \bA_{t_{k+1}} + 
        \sum_{m = 0}^{n-1} \bW(t_m, t_m) \bB^\top_{t_{m}} \bA_{t_{k+1}} 
        + \bW(t_{n},t_{n}) 
        \bB^\top_{t_{n}} \bA_{t_{k+1}}
        \\
        =& ~ \bA_{t_{k+1}} + 
        \sum_{m = 0}^{n} \bW(t_m, t_m) \bB^\top_{t_{m}} \bA_{t_{k+1}} 
    \end{align*}
    as needed. To conclude note that if the equality holds for all $n < k$ then it does also for $n=k$ since
    \begin{align*}
        \bW(t_{k+1},t_{k+1}) &= \bW(t_k,t_{k+1}) + \bW(t_k,t_k) \bB_{t_k}^\top \bA_{t_{k+1}}
        \\ 
        &= 
        \bA_{t_{k+1}} + \sum_{m = 0}^{k-1} \bW(t_m, t_m) \bB^\top_{t_{m}} \bA_{t_{k+1}} + \bW(t_k,t_k) \bB_{t_k}^\top \bA_{t_{k+1}}
        \\ 
        &= 
        \bA_{t_{k+1}} + \sum_{m = 0}^{k} \bW(t_m, t_m) \bB^\top_{t_{m}} \bA_{t_{k+1}}.
    \end{align*}
\end{proof}

\newpage
\section{Pseudocode}


\lstset{
    language=Python,
    basicstyle=\ttfamily\small,
    numbers=left,
    numberstyle=\ttfamily\textcolor[rgb]{0.5,0.5,0.5}{\scriptsize},
    numbersep=8pt,
    frame=lines,
    framesep=2mm,
    breaklines=true,
    keywordstyle=\color{blue},
    commentstyle=\color[rgb]{0,0.6,0}{\itshape},
    stringstyle=\color{red},
    showstringspaces=false,
    columns=flexible,
    mathescape=true
}

\begin{figure}[h!]
\centering
\begin{lstlisting}
def chunk_tensorInv_deltaRule(A, Alpha, B, S_0):
    '''
    Computes S_1 on the chunk [0, 1]
    
    A/Alpha/B: tensors shape [L, R, d]
    S_0: tensor of shape [d, d]
    L: length, R: rank, d: dimension 

    Naive Complexity: O(L^2 * R * (d^2 + R*d + L*R^2)))  
    Parallel Complexity: O_par(L^2*R + d)
    Memory: O(L^2*R^2 + L*R*d + d^2)
    '''
    
    L, R, d = A.shape
    # Utils || O(L^2 * R^2) || O_par(1)
    M = torch.tril(torch.ones((L, L))) - torch.eye(L)
    Id = torch.eye(R)[None, :, None, :] * torch.eye(L)[:, None, :, None]
    
    # Compute C := (I - M \odot (A @ B.T)) || O(L^2 * R^2 * d) || O_par(d)
    C = einsum(A, B, "t i k, s j k -> t i s j") # [L, R, L, R]
    C = Id - M[:, None, :, None] * C
    
    # Compute the inverse D := C^{-1} || O(L^3 * R^3) || O_par(L^2 * R)
    D = torch.zeros_like(C)
    for t in range(L):
        D[t, :, t, :] = torch.eye(R)
        # Forward substitute || O(L^2 * R^3) || O_par(L * R)
        D[t, :, :t, :] -= einsum(C[t, :, :, :], D[:, :, :t, :], "i r j, r j s k -> i s k") 
    
    # Compute W and U || O(L^2 * R^2 * d) || O_par(L * R)
    W = einsum(D, A, "t i s j, s j k -> t i k")
    U = einsum(D, Alpha, "t i s j, s j k -> t i k")

    # Compute the final value S_1 || O(L * R * d^2) || O_par(d + L * R)
    # S_1 = S_0 + ( U + (W @ S_0.T)).T @ B 
    temp = U + einsum(W, S_0, "t i k, k m -> t i m")
    S_1 = S_0 + einsum(temp, B, "t i m, t i k -> m k")
    
    return S_1
\end{lstlisting}
\caption{Pytorch code showing a simple implementation of the algorithm.}
\label{code_tensorInv}
\end{figure}

\newpage

\lstset{
    language=Python,
    basicstyle=\ttfamily\scriptsize,
    numbers=left,
    numberstyle=\ttfamily\textcolor[rgb]{0.5,0.5,0.5}{\scriptsize},
    numbersep=8pt,
    frame=lines,
    framesep=2mm,
    breaklines=true,
    keywordstyle=\color{blue},
    commentstyle=\color[rgb]{0,0.6,0}{\itshape},
    stringstyle=\color{red},
    showstringspaces=false,
    columns=flexible,
    mathescape=true
}

\begin{figure}[!ht]
{
\centering
\begin{lstlisting}
def chunk_sigDeltaRule_optim(A, Alpha, B, S_0):
    '''
    Computes S_1 on the chunk [0, 1] with signature kernel method
    A/Alpha/B: tensors shape [L, R, d]
    S_0: tensor of shape [d, d]
    L: length, R: rank, d: dimension 

    Naive complexity: O(L^2 * R * (d^2 + R*d)))
    Parallel Complexity: O_par(L*R + d)
    Memory: O(L^2*R^2 + L*R*d + d^2)
    '''
    
    L, R, d = A.shape       
    idx = torch.arange(0, L)

    # G[s, t] = (A[s+1] - A[s]) @ B[t].T  || [L, L, R, R] || O(L^2 * R^2 * d) || O_par(d)
    G = einsum(torch.roll(A, -1, dims=0) - A, B, 's i k, t j k -> s t i j')
    # G_off_diag[t] = A[t+1] @ B[t].T || [L, R, R] || O(L * R^2 * d) || O_par(d)
    G_off_diag = einsum(torch.roll(A, -1, dims=0), B, 't i k, t j k -> t i j')

    # Initialise W and U chunk 
    A_Alpha = torch.cat([A[..., None], Alpha[..., None]], dim=-1) 
    WU = torch.where(idx[:, None, None, None] == 0, A_Alpha, 0) # [L, R, d, 2]

    diagonals = torch.zeros([3, L, R, d, 2])
    for i in range(1, 2*L - 1): # Multiply all O(---) inside by O(L)
    
        # Roll Diags || O( L * R * d ) || O_par(1)
        diagonals = torch.roll(diagonals, -1, dims=0)

        # Utils
        t = i // 2
        mask = (idx <= t) & (idx > i - L) # [L]

        # Compute gram entries on diagonal || O( L * R^2 ) || O_par(1)
        G_entries = torch.zeros([L, R, R])
        for s in range(max(0, i - L - 1), min(i-1, L)):
            G_entries[s] = G[i - s - 2][s]

        # Initialise next diagonal
        if i < L: diagonals[1, 0] = A_Alpha[i]

        # Diagonals Update || O(L * R^2 * d) || O_par(R)
        temp = diagonals[0] + torch.roll(diagonals[0], -1, dims=0) - diagonals[-1]  
        temp += einsum(G_entries, WU, 't i j, t j k u -> t i k u')
        temp = torch.roll(temp, 1, dims=0)
        diagonals[1, 1:] = temp[1:]
        diagonals[1] *= mask[:, None, None, None]

        # Set WU value || O(R^2 * d) || O_par(R)
        if i % 2 == 0:
            temp = diagonals[0, t-1].clone()
            temp += einsum(G_off_diag[t-1], WU[t-1], 'i j, j k u -> i k u')
            diagonals[1, t], WU[t] = temp, temp

    W, U = WU[..., 0], WU[..., 1]

    # Compute the final value S_1 || O(L * R * d^2) || O_par(d + L * R)
    # S_1 = S_0 + ( U + (W @ S_0.T)).T @ B 
    temp = U + einsum(W, S_0, "t i k, k m -> t i m")
    S_1 = S_0 + einsum(temp, B, "t i m, t i k -> m k")
    
    return S_1
\end{lstlisting}
}
\caption{Pytorch code showing a simple implementation of the sigDelta algorithm.}
\label{sigDelta_code_optimal}
\end{figure}

\newpage

\end{document}